\documentclass[letterpaper]{article} 
\usepackage{aaai2027}  
\usepackage[hyphens]{url}  
\usepackage{graphicx} 
\usepackage{natbib}  
\usepackage{caption} 
\usepackage{algorithm}
\usepackage{algpseudocode}
\usepackage{url}            
\usepackage{amsfonts}       
\usepackage{nicefrac}       
\usepackage{microtype}      
\usepackage{xcolor}         
\usepackage{enumitem} 
\usepackage{amsmath} 
\setcitestyle{numbers,square}
\setlist[itemize]{noitemsep, topsep=0pt}
\usepackage{diagbox} 
\usepackage{float}
\usepackage{tablefootnote}
\usepackage{multirow}
\usepackage{multicol}
\usepackage{adjustbox}
\usepackage{subfig}
\usepackage{longtable}
\usepackage{makecell}
\usepackage{amssymb}
\usepackage{amsthm}      
\usepackage{amssymb} 
\theoremstyle{plain}    
\newtheorem{remark}{Remark}
\newtheorem{theorem}{Theorem}
\newtheorem{proposition}{Proposition}

\theoremstyle{definition}
\newtheorem{definition}{Definition}
\newtheorem{assumption}{Assumption}

\usepackage{newfloat}
\usepackage{listings}
\DeclareCaptionStyle{ruled}{labelfont=normalfont,labelsep=colon,strut=off} 
\floatstyle{ruled}
\newfloat{listing}{tb}{lst}{}
\floatname{listing}{Listing}

\usepackage{booktabs}

\title{Bridging MARL to SARL: An Order-Independent \\ Multi-Agent Transformer via Latent Consensus}
\author{
    Zijian Zhao \textsuperscript{\rm 1},
    Jing Gao \textsuperscript{\rm 2} \thanks{Corresponding Author: Jing Gao},
    Sen Li \textsuperscript{\rm 1, \rm 3}
}
\affiliations{
    \textsuperscript{\rm 1}The Hong Kong University of Science and Technology
    \textsuperscript{\rm 2}The Hong Kong Polytechnic University
    \textsuperscript{\rm 3}The Hong Kong University of Science and Technology (Guangzhou)

}

\usepackage{bibunits}
\defaultbibliographystyle{aaai2027}
\defaultbibliography{aaai2027}

\begin{document}

\maketitle

\begin{bibunit}

\begin{abstract}
Cooperative multi-agent reinforcement learning (MARL) is widely used to address large joint observation and action spaces by decomposing a centralized control problem into multiple interacting agents. However, such decomposition often introduces additional challenges, including non-stationarity, unstable training, weak coordination, and limited theoretical guarantees. In this paper, we propose the Consensus Multi-Agent Transformer (CMAT), a centralized framework that reformulates fully cooperative, fully observable MARL as a hierarchical single-agent, multi-action reinforcement learning problem. CMAT treats all agents as a unified entity and employs a Transformer encoder to process the large joint observation space. To handle the extensive joint action space, we introduce a hierarchical decision-making mechanism in which a Transformer decoder autoregressively generates a high-level consensus vector, simulating the process by which agents reach agreement on their strategies in latent space. Conditioned on this consensus, all agents generate their actions simultaneously, enabling order-independent joint decision making and avoiding the sensitivity to action-generation order in conventional Multi-Agent Transformers (MAT). This factorization allows the joint policy to be optimized using single-agent PPO while preserving expressive coordination through the latent consensus. To evaluate the proposed method, we conduct experiments on benchmark tasks from StarCraft II, Multi-Agent MuJoCo, and Google Research Football. The results show that CMAT achieves superior performance over recent centralized solutions, sequential MARL methods, and conventional MARL baselines. The code for this paper is available at an anonymous repository: \url{https://anonymous.4open.science/r/CMAT-416A}.
\end{abstract}


\section{Introduction}

Cooperative Multi-Agent Reinforcement Learning (MARL) has become an important framework for solving complex real-world problems such as autonomous fleet coordination, traffic signal optimization, and robotic swarm control. In many of these settings, fully centralized control is feasible and sometimes even necessary; for example, ride-hailing order dispatch often requires global coordination to avoid redundant assignments. However, the joint observation and action spaces typically grow exponentially with the number of agents, giving rise to the Curse of Dimensionality (CoD) \cite{hady2025multi}. A common way to mitigate this issue is to decompose the original problem into multiple decentralized agents that learn collaboratively. Although this decomposition improves scalability, it also introduces substantial challenges, including non-stationarity, unstable training dynamics, and poor credit assignment, which can ultimately limit both empirical performance and theoretical guarantees \cite{jin2025comprehensive}.


A major line of research addresses these difficulties through the Centralized Training Decentralized Execution (CTDE) paradigm. Representative methods such as QMIX \cite{rashid2020monotonic}, Mean Field MARL \cite{yang2018mean}, and COMA \cite{foerster2018counterfactual} use a centralized critic to exploit global state information or neighborhood interactions during training, thereby alleviating instability and credit assignment issues. Nevertheless, under decentralized execution, agents still act independently, which restricts cooperation because each agent can only infer the behavior of others from past training experience. Similar limitations also arise in many implicit consensus-based MARL approaches \cite{oroojlooy2023review}. More recently, explicit communication mechanisms have been explored \cite{liexponential}, allowing agents to exchange information during both training and execution. While promising, these methods introduce additional design challenges, including how to choose communication partners and what information should be shared.

Motivated by these limitations, another line of research has shifted toward fully centralized solutions, namely Centralized Training Centralized Execution (CTCE). Among these approaches, the Multi-Agent Transformer (MAT) \cite{wen2022multi} has emerged as a representative framework by formulating cooperative MARL as a sequential decision-making problem. Specifically, MAT employs a centralized Transformer \cite{vaswani2017attention} encoder to capture relationships among all agents' observations and uses a decoder to generate actions autoregressively. However, the resulting policy can be highly sensitive to the action-generation order. Although recent studies have attempted to optimize this order jointly with action selection \cite{hu2025pmat,takayama2025aoad}, doing so substantially increases the complexity of the problem by expanding the search space to $n!$, where $n$ denotes the number of agents. In the same spirit, the recent Triple-BERT \cite{zhao2025triple} explores an alternative direction by modeling joint action probabilities directly and simultaneously. However, this method relies on a structured policy space that may limit action expressiveness, making it primarily suitable for trip-vehicle assignment problems.

To address these limitations, we propose the Consensus Multi-Agent Transformer (CMAT), which recasts fully cooperative, fully observable MARL as a hierarchical single-agent, multi-action reinforcement learning problem. Building upon MAT, CMAT replaces sequential action generation with an iterative consensus-generation process in the decoder, which simulates how agents reach agreement on their strategies in latent space.
Once the consensus vector is obtained, all agents generate their actions simultaneously while conditioning on this shared high-level strategy, allowing each agent to act with awareness of the strategies of the others. We model the joint action probability as the product of individual action probabilities conditioned on the consensus vector, which makes the overall framework amenable to optimization with single-agent Proximal Policy Optimization (PPO) \cite{schulman2017proximal}. This reformulation removes the sensitivity to action-generation order by construction and enables the whole system to be trained end-to-end as a single agent, rather than as $n$ interacting learners. We evaluate CMAT on a broad range of MARL benchmarks, including StarCraft II \cite{whiteson2019starcraft}, Multi-Agent MuJoCo \cite{de2020deep}, and Google Research Football \cite{kurach2020google}. Experimental results show that CMAT consistently outperforms strong baselines under fully observable cooperative settings.

\section{Preliminaries}
\subsection{Problem Formulation}
We consider cooperative Markov games, represented by $<\mathcal{N},\mathcal{O},\mathcal{A},\mathrm{R},\mathrm{P},\gamma>$ \cite{littman1994markov}. Here, $\mathcal{N} = \{1, 2, \ldots, n\}$ denotes the set of agents, $\mathcal{O} = \{o^1, o^2, \ldots, o^n\}$ denotes the observation set of each agent, and $\mathcal{A} = \{a^1, a^2, \ldots, a^n\}$ denotes the action set of the agents. The joint reward function is defined as $\mathrm{R}: \mathcal{O} \times \mathcal{A} \rightarrow \mathbb{R}$, and the transition function is given by $\mathrm{P}: \mathcal{O} \times \mathcal{A} \times \mathcal{O} \rightarrow \mathbb{R}$. The discount factor is denoted by $\gamma \in [0, 1)$. We further denote the joint policy by $\pi = \{\pi^1, \pi^2, \ldots, \pi^n\}$. At each time step $t$, all agents act simultaneously and receive a joint reward $\mathrm{R}(\mathcal{O}_t, \mathcal{A}_t)$, while the next observation $\mathcal{O}_{t+1}$ is generated according to $\mathrm{P}(\mathcal{O}_{t+1} \mid \mathcal{O}_{t}, \mathcal{A}_t)$. The objective is to maximize the long-term discounted cumulative reward, defined as $J = \sum_{t=0}^{\infty} \gamma^t \mathrm{R}(\mathcal{O}_t, \mathcal{A}_t)$, and the corresponding optimal policy is $\pi^* = \arg \max_{\pi} J$.

In this paper, we focus on a fully cooperative and fully observable setting, where the observations and policies of all agents are available to one another. This assumption is equivalent to a centralized controller with complete information \cite{wen2022multi,zhao2025triple}. Although this excludes settings that require decentralized execution or that involve partial observability and non-shared rewards, the resulting regime is far from a corner case: it directly captures a broad class of real-world systems in which a single operator observes the global state and dispatches all actors under a common objective, such as ride-hailing order dispatch, traffic signal control, warehouse robot coordination, and emergency fleet dispatch. In these applications, the value of a method is determined precisely by how well it coordinates actors under global information, which is the regime we target. In addition, we consider a model-free setting, in which the transition function $\mathrm{P}$ is learned implicitly rather than modeled explicitly. Under this formulation, we define the value functions for single agent and agent set in Appendix \ref{sec:definition}.

\paragraph{From Multi-Agent to Multi-Action Single-Agent.}
The setting above admits an equivalent single-agent description. Because all agents share the observation $\mathcal{O}$, share the reward $\mathrm{R}$, and act simultaneously, the collection of $n$ agents can be regarded as one meta-agent that, at each step, observes $\mathcal{O}$ and emits a single \emph{composite (multi-)action} $\mathcal{A}=(a^1,\ldots,a^n)$. This viewpoint is not merely a formal convenience: it is the natural model for the fully cooperative and fully observable regime we target. With a shared objective and complete information there is no informational asymmetry or conflict of interest to preserve across agents, so the per-agent boundaries carry no decision-theoretic content and can be absorbed into a single controller without loss. The reformulation yields a single-agent Markov Decision Process (MDP) $\langle\mathcal{O},\bar{\mathcal{A}},\mathrm{R},\mathrm{P},\gamma\rangle$, where the composite action space is the product $\bar{\mathcal{A}}=\mathcal{A}^1\times\cdots\times\mathcal{A}^n$, and $\mathrm{R}$, $\mathrm{P}$, $\gamma$ are inherited unchanged from the Markov game. A joint policy $\pi(\mathcal{A}\mid\mathcal{O})$ over the Markov game is exactly a single-agent policy over $\bar{\mathcal{A}}$, and the two problems share the same objective $J$. Casting the problem this way is what allows the entire system to be optimized with single-agent RL rather than with per-agent updates.

The remaining difficulty is that the composite action space grows as $\prod_i|\mathcal{A}^i|$, which is intractable to model as a flat categorical distribution. To keep the single-agent view while retaining tractability, we adopt a \emph{hierarchical} MDP in which the meta-agent first selects a high-level action, the latent consensus $c\in\mathcal{C}$, and then selects the low-level composite action conditioned on $c$. This induces the factorization
\begin{equation}
\pi(\mathcal{A}\mid\mathcal{O}) = \pi^c(c\mid\mathcal{O}) \prod_{i=1}^n \pi^i(a^i\mid\mathcal{O},c) ,
\label{eq:hier-mdp}
\end{equation}
where the high-level policy $\pi^c$ plays the role of a coordinator and the low-level factors $\pi^i$ produce individual actions in parallel. Two properties make this factorization a reasonable model rather than a crude approximation. First, factoring a high-dimensional composite action into per-component conditionals given a shared context is standard practice in single-agent policies with factored or multi-discrete action spaces \cite{schulman2017proximal,yu2022surprising}: the conditional independence is imposed only at the \emph{output layer}, while the network hidden layers are shared, so inter-action (inter-actor) correlations are still learned and carried through the shared representation, and only the final output distributions are assumed independent given that representation. Second, in our design the shared context is precisely the latent consensus $c$, which is refined to encode a coordinated team strategy before any action is emitted; conditioning every factor on $c$ therefore strengthens the coupling well beyond a generic shared trunk. Under this hierarchical MDP, CMAT does not decompose the problem into $n$ interacting learners; it instantiates a single multi-action policy whose internal structure captures inter-actor dependence through the shared latent $c$. We detail the corresponding architecture and optimization in the next section.

\subsection{Multi-Agent Transformer and Its Variants} \label{sec:MAT}

\begin{figure}[t]
    \centering
    \begin{tabular}{c c | c c}
        \toprule
        & & \multicolumn{2}{c}{\textbf{\textcolor{blue}{Agent 1}}} \\
        & & \textbf{\textcolor{blue}{A}} & \textbf{\textcolor{blue}{B}} \\
        \midrule
        \multirow{3}{*}{\rotatebox{90}{\textbf{\textcolor{red}{Agent 2}}}}
        & \rotatebox{90}{\textbf{\textcolor{red}{A}}} & 1 & -100 \\
        & & & \\
        & \rotatebox{90}{\textbf{\textcolor{red}{B}}} & 0 & 100 \\
        \bottomrule
    \end{tabular}
    \caption{Payoff Matrix of Cooperative Game: The joint actions $(A, A)$ and $(B, B)$ are both NE, while $(B, B)$ is the global optimum.}
    \label{tab:payoff_matrix}
\end{figure}

MAT \cite{wen2022multi} formulates the cooperative Markov game as a sequential model based on the Transformer architecture \cite{vaswani2017attention}, with the goal of capturing action dependencies among agents. Specifically, it first employs an encoder to extract observation features for all agents, denoted by $\hat{O} = \{\hat{o}^1, \hat{o}^2, \ldots, \hat{o}^n\}$, and predicts the V-value $\mathrm{V}(\hat{o}^i)$ from each observation feature $\hat{o}^i$. It then uses a decoder to generate agent actions {\em sequentially}, where self-attention captures inter-agent relationships and cross-attention models the dependence between actions and agents' observations. For simplicity, we assume that the decision order proceeds from agent 1 to agent $n$, and we denote the parameters of the critic and actor by $\phi$ and $\theta$, respectively. (\emph{Note that $\phi$ and $\theta$ partially overlap.}) The loss functions can then be expressed as:
\begin{equation}
\begin{aligned}
\mathrm{L}_{Critic}^{MAT}(\phi) & = \mathbf{E}_{i \in \mathcal{N}, t \in \mathcal{T}} \left[ (\mathrm{R}(\mathcal{O}_t, \mathcal{A}_t) + \gamma \mathrm{V}_{\phi^-}(\hat{o}^i_{t+1}) - \mathrm{V}_{\phi}(\hat{o}^i_{t}))^2 \right] \ , \\
\mathrm{L}_{Actor}^{MAT}(\theta) & = \mathbf{E}_{i \in \mathcal{N}, t \in \mathcal{T}} \left[ \min\left(r_t^{i}(\theta) \mathrm{A}(\mathcal{O}_t, \mathcal{A}_t),  \right. \right. \\
& \left. \left.  \mathrm{CLIP}(r_t^{i}(\theta), 1-\epsilon, 1+\epsilon) \mathrm{A}(\mathcal{O}_t, \mathcal{A}_t)\right) \right] \ , 
\label{eq:mat}
\end{aligned}
\end{equation}
where $\phi^-$ are the parameters of the target critic network, $r_t^{i}(\theta)$ is defined as $\frac{\pi_{\theta}^i(a_t^i|\mathcal{O}_t, a_t^{1:i-1})}{\pi_{\theta^-}^i(a_t^i|\mathcal{O}_t, a_t^{1:i-1})}$, $\theta^-$ are the network parameters used for sample collection, and $\mathcal{T}$ is defined as $\{0,1,2,\ldots,\infty\}$.  The advantage function is estimated using Generalized Advantage Estimation (GAE) \cite{schulman2015high} with the estimated V-value defined as $\mathrm{V}_{\phi}(\mathcal{O}_t) = \frac{1}{n} \sum_{i=1}^n \mathrm{V}_{\phi}(\hat{o}^i_{t})$.

Although MAT \cite{wen2022multi} has demonstrated promising performance when combined with Multi-Agent Advantage Decomposition \cite{kuba2022trust,zhong2024heterogeneous}, which theoretically guarantees that this optimization scheme can improve the joint advantage, several studies \cite{hu2025pmat,takayama2025aoad} have observed that the decision order of agents can substantially affect performance. A primary reason for this phenomenon lies in incorrect credit assignment induced by biased value-function estimation. In particular, for leading agents, the value estimation often fails to adequately capture the influence of subsequent agents. Given agent $i$, the real advantage function should be (derived from Eq. \ref{eq:value2} and Eq. \ref{eq:value3}):
\begin{equation}
\begin{aligned}
& \mathrm{A}^i(\mathcal{O}, a^{1:i-1},a^i)  = \mathrm{Q}^{1:i}(\mathcal{O}, a^{1:i}) - \mathrm{Q}^{1:i-1}(\mathcal{O}, a^{1:i-1}) \ , \\
& = \mathbf{E}_{\hat{a}^{i:n}} \left[ \mathrm{Q}(\mathcal{O}, [a^{1:i},\hat{a}^{i+1:n}]) - \mathrm{Q}(\mathcal{O}, [a^{1:i-1},\hat{a}^{i:n}])  \right] \ .
\label{eq:advantage}
\end{aligned}
\end{equation}
However, in the loss function defined in Eq. \ref{eq:mat}, $\mathrm{A}^i(\mathcal{O}, a^{1:i-1},a^i)$ is replaced by $\mathrm{A}(\mathcal{O}, \mathcal{A})$, thereby introducing the effects of subsequent agents' actions into the credit assignment of leading agents. Conversely, for following agents, another inconsistency arises between the joint V-value $\mathrm{V}(\mathcal{O})$, which depends only on joint observations, and the actors' behavior $\pi^{i}(\mathcal{O}, a^{1:i-1})$, which additionally conditions on the actions of preceding agents. Both issues can distort the optimization direction and ultimately hinder effective cooperation within the multi-agent system.


To illustrate a failure scenario involving MAT, consider the cooperative game depicted in Fig. \ref{tab:payoff_matrix}. This game consists of a single step where both agents have two possible actions. The values in the table represent the long-term team reward, with MAT following the decision order of Agent 1 leading to Agent 2. We assume that the initial V-value of all states is set to 0. If Agent 1 chooses action B and Agent 2 subsequently selects action A, this results in a substantial negative reward and advantage value of -100. As a consequence, the probabilities $ \pi^1(a^1=B|\mathcal{O}) $ and $ \pi^2(a^2=A|\mathcal{O}, a^1=B) $ will experience a significant decline. This situation can trigger an immediate reduction in action entropy, impeding the model's ability to explore the optimal action combination $(B, B)$ due to insufficient exploration-exploitation mechanisms; specifically, the probability of Agent 1 exploring action B, $ \pi^1(a^1=B|\mathcal{O}) $, decreases dramatically. Subsequently, when Agent 1 opts for action A with increased probability, the positive reward and advantage value will further elevate the action probability of $ \pi^1(a^1=A|\mathcal{O}) $ while correspondingly reducing $ \pi^1(a^1=B|\mathcal{O}) $, thereby exacerbating this dilemma.

Although \cite{hu2025pmat,takayama2025aoad} attempted to mitigate this issue by learning the decision order and agent actions simultaneously, such an approach substantially increases training complexity, as the search space over action orders grows to $n!$. More importantly, greater flexibility in ordering does not fundamentally resolve the optimization limitations of sequential decision making. {\em According to the theoretical analyses in \cite{kuba2022trust,liu2024maximum,zhong2024heterogeneous}, such sequential updating methods are guaranteed to converge only to a Nash Equilibrium (NE). For example, in the game shown in Fig. \ref{fig:example}, conventional MAT-based methods may converge to a Pareto-suboptimal NE. This limitation motivates us to move beyond sequential MARL formulations and instead reformulate cooperative MARL as a single-agent, multi-action problem, which allows the whole system to be trained end-to-end with single-agent RL and removes the order-dependent bias by construction.} To the best of our knowledge, such a formulation has not yet been established in the existing literature. We emphasize that this reformulation is a modeling and optimization-structure benefit rather than a claim of a stronger convergence guarantee: with deep function approximation, single-agent PPO itself carries no guarantee of reaching the global optimum, and we do not claim one for CMAT.

From this perspective, a closely related line of research is to formulate cooperative decision making directly within a SARL framework. Recently, \cite{zhao2025triple} proposed Triple-BERT, a SARL framework for ride-sharing tasks, which aims to directly learn the joint action probability using a BERT model \cite{devlin2019bert}. Specifically, it assumes that the joint action probability $\pi(\mathcal{A}|\mathcal{O})$ can be expressed as $\mathrm{z}(\prod_{i=1}^n \pi^i(a^i|\mathcal{O}))$, where $\mathrm{z}(\cdot)$ is an increasing mapping function. However, this assumption does not always hold in practice, and as a result, the method can still suffer from credit assignment issues similar to those in MAT. For instance, in the example shown in Fig. \ref{tab:payoff_matrix}, exploring the action $(B, A)$ can reduce the probability of $\pi^1(a^1=B|\mathcal{O})$, thereby making it less likely to explore the optimal action combination $(B, B)$. Due to page limitations, a more detailed discussion of related work is provided in Appendix \ref{sec:Related Work}.

\section{Methodology}
\subsection{Overview}

\begin{figure}[htbp]
\centering 
\includegraphics[width=0.45\textwidth]{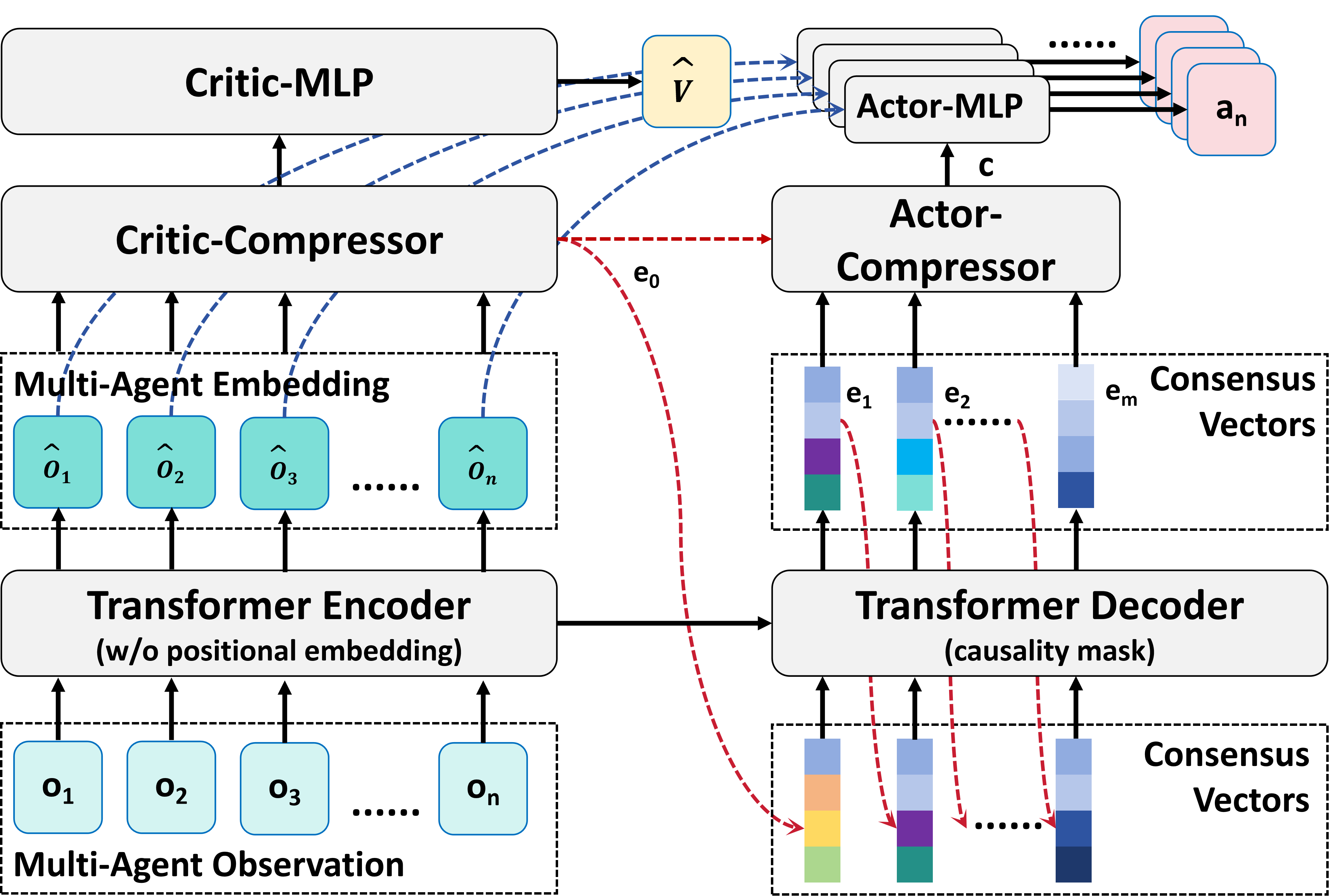}
\caption{Network Architecture: The Transformer encoder first extracts the features and relationships among the observations of all agents, compressing them into a single initial consensus vector. This vector is used for V-value estimation and iterated by the Transformer decoder to produce the final consensus vector. The final consensus vector is then combined with the extracted features of each agent's observation from the encoder to generate actions.}
\label{fig:main}
\end{figure}

In this paper, we propose an order-independent MAT \cite{wen2022multi} inspired by the consensus mechanism, named the CMAT. Unlike conventional MAT, our approach utilizes a decoder to iterate a consensus vector instead of specifying detailed actions, simulating the process by which all agents reach an agreement on their strategies in latent space. This instantiates the hierarchical MDP introduced in Eq. \ref{eq:hier-mdp}: we treat the whole system as a single multi-action agent, where the consensus $ c $ serves as a high-level action that summarizes the shared strategy and guides the low-level actions $ a^i $, and the per-agent action probability $ \pi^i(a^i|\mathcal{O},c) $ is modeled independently conditioned on $c$. In this way, we treat all agents as a unified entity and directly apply a single-agent PPO approach \cite{schulman2017proximal} to optimize the joint action policy $ \pi(\mathcal{A}|\mathcal{O}) $ of Eq. \ref{eq:hier-mdp}.

As discussed in Eq. \ref{eq:hier-mdp}, the conditional independence of the per-agent factors is imposed only at the output layer; because all action heads read from a common encoder and the shared consensus vector $c$, inter-actor correlations are captured within the shared hidden representation and summarized in $c$. CMAT therefore captures inter-actor dependence through shared representation and latent consensus, rather than through an explicit autoregressive coupling of the output distributions as in MAT.

The overall network architecture is illustrated in Fig. \ref{fig:main} and is built on the Transformer backbone \cite{vaswani2017attention}. First, the encoder extracts features and relationships among the observations of all agents using bi-directional self-attention, resulting in an observation embedding sequence $ \hat{\mathcal{O}} $. Next, a Critic-Compressor compresses this sequence into a single vector $ e^0 $, which is then used by the Critic-MLP to estimate the V-value $ \mathrm{V}(\mathcal{O}) $. This vector is also input to the Transformer decoder to perform consensus iteration for $ m $ times, producing a sequence of consensus vectors $\{e^1, e^2, \ldots, e^m\}$. Subsequently, an Actor-Compressor compresses the set $ \mathcal{E} = \{e^0, e^1, e^2, \ldots, e^m\} $ into a single vector $ c $, representing the final consensus and reflecting the potential strategy of all agents. Finally, the actor is generated by the Actor-MLP, which combines the consensus vector $ c $ with the observation embedding $ \hat{o}^i $ of each agent. Below we will discuss the details of each component within this architecture.

\subsection{Network Architecture}

\textbf{A. Encoder:} In the proposed framework, we first utilize a Transformer encoder to extract features and relationships among agents. Specifically, since the input needs to be order-independent, we eliminate the positional embedding that is commonly used in similar works \cite{zhao2025triple, wen2022multi}. The extraction process can be expressed as:
\begin{equation}
\begin{aligned}
\hat{\mathcal{O}} = \mathrm{Encoder}(\mathcal{O}) \in \mathbb{R}^{n,d} \ ,
\label{eq:encoder}
\end{aligned}
\end{equation}
where $d$ is the hidden dimension. Removing the positional embedding makes the encoder \emph{permutation equivariant}: for any permutation $\rho$ of the agent indices, $\mathrm{Encoder}(\rho\cdot\mathcal{O}) = \rho\cdot\mathrm{Encoder}(\mathcal{O})$, since self-attention without positional information treats its inputs as a set rather than a sequence. This property has been formally established for attention-based set models \cite{zaheer2017deep,lee2019set}. Consequently, permuting the order in which agent observations are fed to the encoder permutes the output embeddings identically but leaves their content unchanged; because the subsequent Critic-Compressor and Actor-Compressor aggregate these embeddings in a permutation-invariant manner (Eq. \ref{eq:critic_compressor}), the estimated V-value and the generated consensus are invariant to agent ordering, and each agent's action depends only on its own embedding $\hat{o}^i$ and the shared consensus $c$. CMAT therefore attains order independence by construction, without relying on any explicit ordering module.

Afterwards, we employ the Critic-Compressor to reduce $ \hat{\mathcal{O}} $ into a single vector, referred to as the initial consensus vector $ e^0 $, represented by the following process:
\begin{equation}
\begin{aligned}
x & \in \mathbb{R}^{N \times d_1} \ , \\
M & = \text{Softmax}(\text{MLP}(x), \text{dim=0}) \in \mathbb{R}_+^{N,h} \ , \\
z & = M^T \cdot x \in \mathbb{R}^{d_1 , h} \ , \\
y & = \text{MLP}(\text{Flatten}(z)) \in \mathbb{R}^{d_2} \ ,
\label{eq:critic_compressor}
\end{aligned}
\end{equation}
where $ x $ and $ y $ represent the input and output of the compressor, respectively, $ N $ is the length, $ d_1 $, $ h $, and $ d_2 $ are hidden dimensions, and $ M $ and $ z $ are intermediate variables. This structure is commonly used in sequence compression \cite{zhao2025csi,chou2024midibert}, motivated by the following considerations: First, the intermediate feature $ z $ is constructed from the original sequence $ x $ using $ h $ different combinations. For the $ i^{th} $ combination, it derives from a mixture of the original sequence, weighted by the $ i^{th} $ row of the weight matrix $ M $. Finally, the output hidden feature is produced by applying an MLP to compress the intermediate feature $ z $. In the context of the Critic-Compressor, $ x $ and $ y $ correspond to $ \hat{\mathcal{O}} $ and $ e^0 $, respectively, while dimensions $ d_1 $ and $ d_2 $ are both set to $ d $.

Subsequently, we use the Critic-MLP to process the initial consensus vector $e^0$ to obtain the estimated V-value, expressed as:
\begin{equation}
\begin{aligned}
\hat{\mathrm{V}}(\mathcal{O}) & = \text{Critic-MLP}(e^0) \ , \\
e^0 & = \text{Critic-Compressor}(\hat{\mathcal{O}}) \ .
\label{eq:critic}
\end{aligned}
\end{equation}

\textbf{B. Decoder:} For the decoder, we first auto-regressively iterate the consensus vector to achieve a converged strategy among agents, formulated as:
\begin{equation}
\begin{aligned}
\{e^1, e^2, \ldots, e^m\} = \mathrm{Decoder}(e^0) \ .
\label{eq:decoder}
\end{aligned}
\end{equation}
Unlike MAT, the positional embedding of the decoder is retained because we want the model to be aware of the convergence process of the consensus. 

Following this, we utilize the Actor-Compressor to compress $\mathcal{E}$ to obtain the final consensus vector, expressed as:
\begin{equation}
\begin{aligned}
c  = \text{Actor-Compressor}(\mathcal{E}) \ ,
\label{eq:actor_compressor}
\end{aligned}
\end{equation}
which follows the same procedural process as the Critic-Compressor defined in Eq. \ref{eq:critic_compressor}. \emph{Here, we choose to utilize the combination of the entire set $\mathcal{E}$ instead of only the last generated vector $e^m$ as the consensus. This decision helps to prevent any potential information loss that may occur during the iteration process.}

Finally, we combine the consensus vector $c$ with the observation feature $\hat{o}^i$ and feed them to the Actor-MLP to generate the action for agent $i$, given by:
\begin{equation}
\begin{aligned}
a^i  = \text{Actor-MLP}([\hat{o}^i;c]) \ .
\label{eq:actor_MLP}
\end{aligned}
\end{equation}

\subsection{Training Process}

\textbf{A. Training:} In our CMAT, we view all agents as a unified entity through a single-agent, multi-action perspective. The policy is optimized directly using a single-agent PPO approach:
\begin{equation}
\begin{aligned}
\mathrm{L}_{Critic}^{CMAT}(\phi)  = & \mathbf{E}_{t \in \mathcal{T}} \left[ (\mathrm{R}(\mathcal{O}_t, \mathcal{A}_t) + \gamma \mathrm{V}_{\phi^-}(\mathcal{O}_{t+1}) - \mathrm{V}_{\phi}(\mathcal{O}_t))^2 \right] \ , \\
\mathrm{L}_{Actor}^{CMAT}(\theta)  = & \mathbf{E}_{i \in \mathcal{N}, t \in \mathcal{T}} \left[ \min\left(\mathcal{R}_t(\theta) \mathrm{A}(\mathcal{O}_t, \mathcal{A}_t), \right. \right.  \\
&  \left. \left. \mathrm{CLIP}(\mathcal{R}_t(\theta), 1-\epsilon, 1+\epsilon) \mathrm{A}(\mathcal{O}_t, \mathcal{A}_t)\right) \right] \ ,
\label{eq:cmat}
\end{aligned}
\end{equation}
where the ratio $\mathcal{R}_t^{i}(\theta)$ is defined as (derived from Eq. \ref{eq:hier-mdp}):
\begin{equation}
\begin{aligned}
\mathcal{R}_t(\theta) & = \frac{\pi_{\theta}(\mathcal{A}_t|\mathcal{O}_t)}{\pi_{\theta^-}(\mathcal{A}_t|\mathcal{O}_t)} = \frac{\pi^c_{\theta}(c|\mathcal{O}) \prod_{i=1}^n \pi^i(a^i|\mathcal{O},c)}{\pi^c_{\theta^-}(c^-|\mathcal{O}) \prod_{i=1}^n \pi^i(a^i|\mathcal{O},c^-)}  \\
& = \frac{\prod_{i=1}^n \pi^i_\theta(a^i|\mathcal{O},\mu_\theta(\mathcal{O}))}{\prod_{i=1}^n \pi^i_{\theta^-}(a^i|\mathcal{O},\mu_{\theta^-}(\mathcal{O}))} \ ,
\label{eq:ratio}
\end{aligned}
\end{equation}
where $c=\mu_{\theta}(\mathcal{O})$ and $c^-=\mu_{\theta^-}(\mathcal{O})$ are consensus vectors generated by the current and old actors, respectively, and $\mu_\theta(\cdot)$ represents the consensus generation policy. Since the consensus generation process is deterministic, similar to TD3 and DDPG, both $ \pi^c_{\theta}(c|\mathcal{O}) $ and $ \pi^c_{\theta^-}(c^-|\mathcal{O}) $ are fixed at 1. The consensus generation policy is updated implicitly that during backpropagation, the gradient flows back to $ \mu_\theta(\mathcal{O}) $. We adopt a deterministic consensus for training stability and to avoid importance weighting over the latent space; a stochastic consensus policy, together with the associated cooperative Stackelberg (bilevel) formulation, is a natural generalization that we discuss as future work in Appendix \ref{app:theory-future}.

To illustrate how CMAT resolves the dilemma shown in Fig. \ref{tab:payoff_matrix}, consider again the joint action $(B, A)$. Under our formulation, the affected probability is $\pi^1(a^1=B|\mathcal{O},c)$, which is conditioned on the specific consensus $c$ and therefore tied only to the particular latent strategy that led to the suboptimal joint action. As a result, reducing this probability mainly indicates that the consensus $c$ (i.e., $\mu(\mathcal{O})$) is suboptimal, rather than penalizing action $B$ for Agent 1 in an unconditional manner. In contrast, the policy under the optimal consensus $c^*$, namely $\pi^1(a^1=B|\mathcal{O},c^*)$, remains unaffected. Meanwhile, the consensus generation module $\mu(\mathcal{O})$ is updated through gradient descent based on the overall advantage, gradually steering $\mu(\mathcal{O})$ toward $c^*$ over time.


\textbf{B. Fine-tuning:} In the joint training phase above, the consensus generation policy $\mu_{\theta}(\mathcal{O})$ and the action policy $\pi_{\theta}(\cdot|\mathcal{O},c)$ are updated at the same time. This creates a moving-target problem: the action heads are chasing a consensus representation that is itself still changing, and conversely the consensus is optimized against action heads that have not yet adapted to it. Such coupled updates can interfere with each other and slow convergence, a difficulty familiar from bilevel and actor-critic style optimization where the inner and outer variables are trained jointly. To decouple the two, we introduce a fine-tuning phase that fixes one part of the model and refines the other, using the same loss as in Eq. \ref{eq:cmat}. Concretely, we provide two complementary variants:

\textbf{(i) Consensus Enhancement:} We fine-tune the critic $\text{V}_{\phi}(\mathcal{O})$ and the consensus generation policy $\mu_{\theta}(\mathcal{O})$ while keeping the action policy $\pi_{\theta}(\cdot|\mathcal{O},c)$ fixed, so that only the gradients of the Critic-MLP, decoder, and Actor-Compressor are active. This variant treats the current action heads as a fixed set of followers and searches for a better high-level coordination signal for them. It is the natural choice when the per-agent action heads are already competent but the team-level coordination encoded in $c$ is suboptimal, i.e., when the bottleneck is coordination rather than individual control.

\textbf{(ii) Action Policy Enhancement:} We fine-tune the critic $\text{V}_{\phi}(\mathcal{O})$ and the action policy $\pi_{\theta}(\cdot|\mathcal{O},c)$ while fixing the consensus generation policy $\mu_{\theta}(\mathcal{O})$, so that only the Critic-MLP and Actor-MLP receive gradients. This variant freezes the coordination signal and lets each agent better exploit an already-informative consensus. It is preferable when the consensus is already meaningful but the individual action heads underfit it, i.e., when the bottleneck is per-agent execution rather than coordination.

Because the two variants optimize the same objective while alternating which block is held fixed, they can be viewed as two directions of a single block-wise refinement of the joint policy from a single-agent perspective; the appropriate choice for a given task depends on which of the two bottlenecks dominates. Empirically we find that both yield similar final performance, indicating that CMAT is not sensitive to this choice in the benchmarks we consider. The whole training process is provided at Appendix \ref{sec:algorithm}. A rigorous analysis of why the deployed deterministic architecture is effective is given in Appendix \ref{app:theory}, and a higher-level discussion of a more general stochastic-consensus framework as future work is provided in Appendix \ref{app:theory-future}.

\section{Experiment} \label{sec:experiment}

\subsection{Experiment Setup}

To validate the efficiency of our proposed method, we evaluate its performance in a series of benchmark MARL experiment scenarios, including:
\begin{itemize}[left=0pt]
    \item \textbf{StarCraft II \cite{whiteson2019starcraft}:} A challenging real-time strategy game environment that provides complex micromanagement tasks for testing multi-agent cooperation and coordination. Since many advanced MARL methods can achieve a 100\% win rate in simpler environments \cite{wen2022multi}, we focus on the most difficult task scenarios for comparison in this paper, including ``MMM2", ``6h vs 8z", and ``3s5z vs 3s6z".
    
    \vspace{0.1cm}
    \item \textbf{Multi-Agent MuJoCo \cite{de2020deep}:} A set of continuous control robotic tasks adapted from MuJoCo, where multiple agents must coordinate to control a single or multiple articulated bodies, testing fine-grained cooperation. We select three challenging scenarios, each with a single agent controlling one body: ``8$\times$1-Agent Ant", ``6$\times$1-Agent HalfCheetah", and ``6$\times$1-Agent Walker2d".
    
    \vspace{0.1cm}
    \item \textbf{Google Research Football \cite{kurach2020google}:} A highly realistic football simulation platform that requires agents to master individual skills and long-horizon teamwork strategies in a dynamic, physics-based environment. The detailed tasks include ``academy counterattack easy", ``academy pass and shoot with keeper", and ``academy 3 vs 1 with keeper".
\end{itemize}

To illustrate the superior performance of our method, we compare it against several strong baselines, using the same settings as reported in \cite{wen2022multi}:
\begin{itemize}[left=0pt]
    \item \textbf{MAT \cite{wen2022multi}:} The first centralized observation with a sequential decision-making MARL framework. More details are presented in Section \ref{sec:MAT}.

    
    \vspace{0.1cm}
    \item \textbf{PMAT \cite{hu2025pmat}:} Based on MAT, PMAT introduces an additional module to determine agent action order, grounded in the theory of Plackett-Luce sampling \cite{luce1959individual,plackett1975analysis}. AOAD-MAT \cite{takayama2025aoad}, a synchronous work, employs a similar method to PMAT but decides action order followed by detailed action decision, while AOAD-MAT does both simultaneously. Given the high similarity between them and the fact that only PMAT provides official code, we select PMAT as the benchmark here.

    \vspace{0.1cm}
    \item \textbf{Triple-BERT \cite{zhao2025triple}:} Triple-BERT is the first centralized SARL framework for the ride-sharing task, utilizing BERT for observation feature and relationship extraction and processing a large action space through an action decomposition mechanism. 
    
    \vspace{0.1cm}
    \item \textbf{HAPPO \cite{kuba2022trust,zhong2024heterogeneous}:} HAPPO first illustrates and proves the efficiency of sequential optimization among agents in cooperative MARL scenarios, providing a strong foundation for MAT. Based on \cite{zhong2024heterogeneous}, HAPPO serves as the SOTA method in the family of Heterogeneous-Agent Reinforcement Learning (HARL) methods. 
    
    \vspace{0.1cm}
    \item \textbf{MAPPO \cite{yu2022surprising}:} MAPPO is a popular and strong CTDE baseline in cooperative MARL, utilizing a centralized critic during training to provide direction for actors with global information, while actors operate independently during execution.
\end{itemize}


More details about the experiment configurations can be found at Appendix \ref{sec:configuration}.

\subsection{Experiment Results}

\begin{figure*}[t!]
\centering 
\subfloat[Legend]{\includegraphics[width=0.8\textwidth]{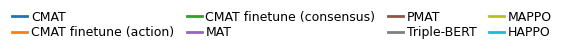}} \\
\subfloat[MMM2]{\includegraphics[width=0.33\textwidth]{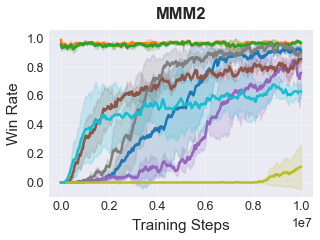}} 
\subfloat[6h vs 8z]{\includegraphics[width=0.33\textwidth]{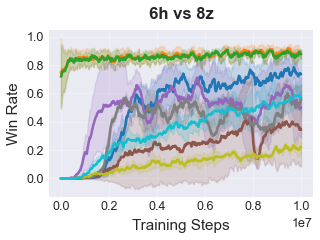}}
\subfloat[3s5z vs 3s6z]{\includegraphics[width=0.33\textwidth]{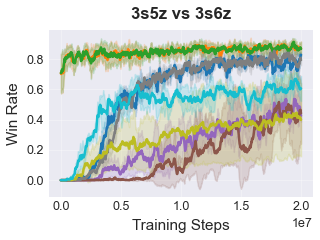}} \\
\subfloat[8$\times$1-Agent Ant]{\includegraphics[width=0.33\textwidth]{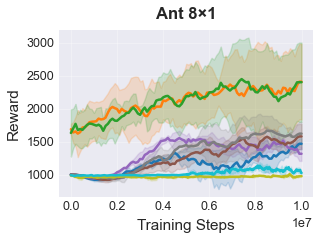}} 
\subfloat[6$\times$1-Agent HalfCheetah]{\includegraphics[width=0.33\textwidth]{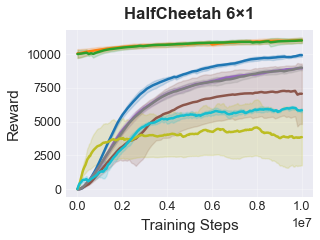}}
\subfloat[6$\times$1-Agent Walker2d]{\includegraphics[width=0.33\textwidth]{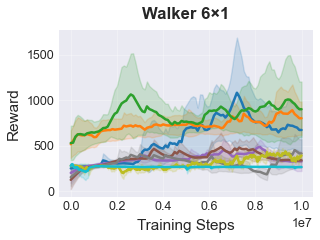}} \\
\subfloat[academy counterattack easy]{\includegraphics[width=0.33\textwidth]{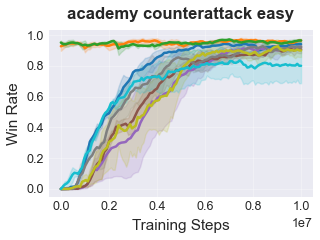}} 
\subfloat[academy pass and shoot with keeper]{\includegraphics[width=0.33\textwidth]{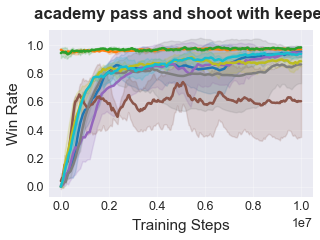}}
\subfloat[academy 3 vs 1 with keeper]{\includegraphics[width=0.33\textwidth]{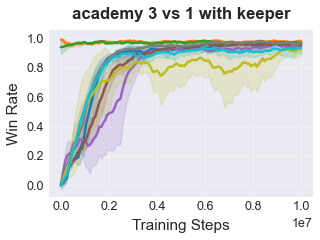}} \\
\caption{Training Curves under 5 Random Seeds: The shadow parts represent the standard deviation.}
\label{fig:exp}
\end{figure*}

The experimental results are shown in Fig. \ref{fig:exp}. Note that the two CMAT-finetune curves start with high performance because they are initialized from the well-trained CMAT. We observe that CMAT achieves superior performance in most scenarios, and its advantage becomes more evident after fine-tuning, with both CMAT-finetune variants achieving the best performance across all scenarios. The improvement is consistent across the three benchmark families, which stress different aspects of coordination: discrete micromanagement in SMAC, continuous high-dimensional control in MA-MuJoCo, and sparse-reward long-horizon cooperation in Google Research Football. This breadth indicates that the gains are not tied to a particular action type or reward structure, but follow from the consensus formulation itself. The margin over the baselines tends to widen on the harder, larger-agent scenarios, where coordinating actions is most difficult and the shared consensus provides the most benefit; on the easier scenarios the strong baselines are already close to saturation, so the gap is naturally smaller. Beyond mean return, CMAT typically exhibits comparable or lower variance across the five seeds than the MAT-family baselines, which we attribute to the cleaner single-agent optimization structure that avoids the sequential per-agent updates. Notably, the results of CMAT-finetune (action) clearly demonstrate the effectiveness of our consensus mechanism. If the consensus were invalid and independent of the states, it could be disregarded by the action head, in which case CMAT would degrade to Triple-BERT, where all agents take actions simultaneously without any consensus.


\subsection{Ablation Study and Sensitivity Analysis}

We further conduct ablation studies and hyper-parameter sensitivity analyses to isolate the contribution of each design choice; the full setup, figures, and per-scenario curves are deferred to Appendix \ref{sec:ablation}. Two conclusions are validated. First, replacing the mixed consensus produced by our Actor-Compressor with the single last consensus vector $e^m$ degrades performance in most scenarios, confirming that aggregating the whole consensus trajectory $\mathcal{E}$ preserves information that would otherwise be lost during auto-regressive decoding. Second, setting the number of decoder iterations to the agent count $m=n$ is a robust default: fewer iterations under-form the consensus, while more inject redundant information that complicates training. These results indicate that the observed gains stem from the consensus mechanism itself rather than from incidental hyper-parameter tuning.


\section{Conclusion}
In this paper, we proposed CMAT, a novel centralized method for fully observable cooperative MARL tasks. By using a Transformer decoder to iteratively generate a consensus representation, CMAT reformulates cooperative MARL as a hierarchical single-agent, multi-action problem, in which all agents act simultaneously while conditioning on a shared consensus and full information about one another. This formulation lets the whole system be optimized with single-agent PPO and alleviates several structural limitations of conventional MAT, including order dependency and actor-critic inconsistency, and it avoids the sequential updating scheme under which MAT-family methods are guaranteed to converge only to a Nash Equilibrium. We do not claim a stronger convergence guarantee for the deep implementation; rather, the benefit is a cleaner optimization structure, which is reflected in the empirical results. Extensive experiments on a series of standard MARL benchmarks demonstrate that CMAT consistently outperforms strong baselines. Additional discussions are provided in Appendix \ref{sec:Discussions}.
\clearpage
\putbib
\end{bibunit}


\clearpage
\appendix

\begin{bibunit}



\begin{figure*}[t!]
\centering 
\includegraphics[width=0.9\textwidth]{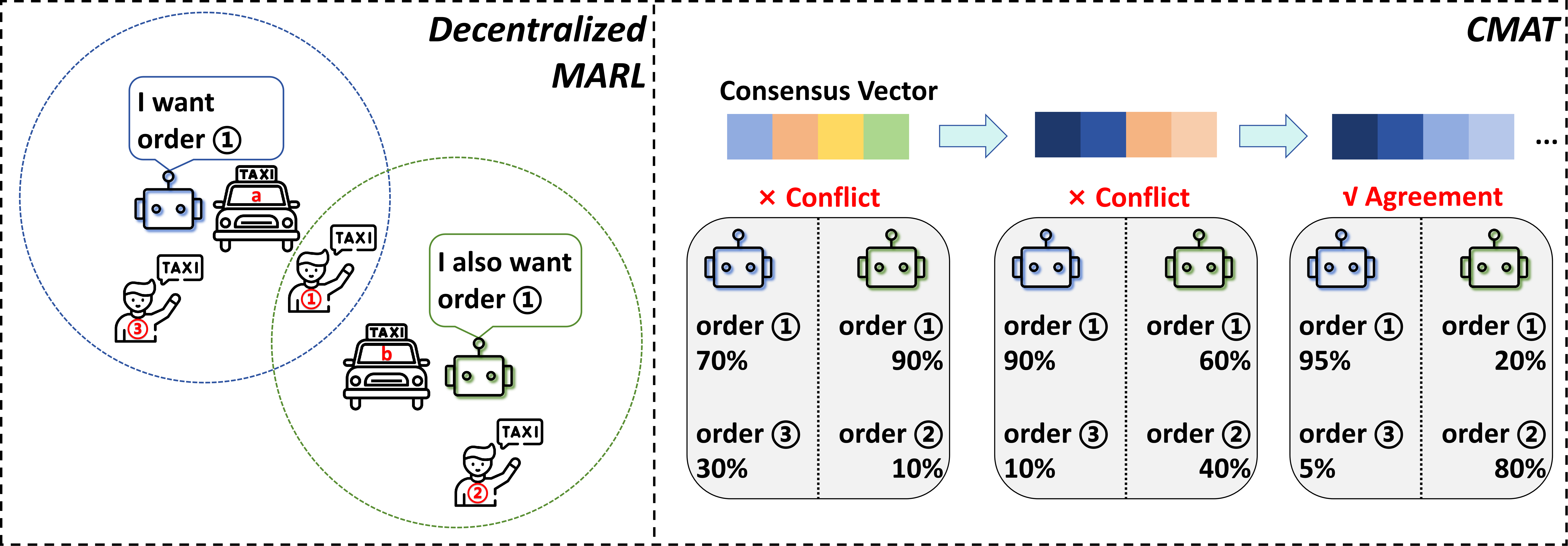}
\caption{Comparison between CMAT and Conventional Decentralized MARL Methods.}
\label{fig:example}
\end{figure*}

\section{Related Work} \label{sec:Related Work}
\subsection{Cooperative Multi-Agent Reinforcement Learning}

Cooperative MARL has found extensive applications across diverse domains, including power control, robotic fleet management, and ride-hailing systems \cite{yuan2023survey}. As highlighted in a comprehensive survey \cite{oroojlooy2023review}, existing methods can be taxonomically classified into five major categories: independent learning, centralized critic, value decomposition, consensus-based, and communication-based approaches.

Early research predominantly focused on independent learners, which represent the most straightforward adaptation of single-agent RL to multi-agent settings. By treating each agent as an independent entity and considering others as part of the environment, standard RL algorithms can be readily applied. However, this paradigm suffers from fundamental limitations: the environment becomes inherently non-stationary due to concurrently learning peers, and agents tend to converge to local optima by maximizing individual rewards while neglecting global cooperation. Although subsequent improvements such as Hysteretic Q-Learning \cite{matignon2007hysteretic} were proposed to mitigate these issues, independent learning still struggles in large-scale systems and sparse-reward scenarios.

To address these challenges, the Centralized Training with Decentralized Execution (CTDE) and Centralized Training with Centralized Execution (CTCE) paradigms have been widely adopted in centralized critic and value decomposition methods \cite{oroojlooy2023review}. In centralized critic approaches, actor-critic algorithms such as PPO, SAC, and DDPG are extended by replacing the critic with a centralized counterpart that observes global information during training, thereby stabilizing the learning process while preserving decentralized execution. To further enhance representational capacity, attention mechanisms inspired by the Transformer architecture \cite{vaswani2017attention} have been incorporated to model inter-agent relationships \cite{mao2018modelling}. Foerster et al. \cite{foerster2018counterfactual} proposed Counterfactual Multi-Agent (COMA) policy gradients to resolve the credit assignment problem through counterfactual baselines, an idea subsequently refined in works such as \cite{li2021shapley}. Nevertheless, many of these methods encounter the Curse of Dimensionality (CoD) as the number of agents scales up.

Value decomposition methods, by contrast, focus on factorizing the global reward into individual credit assignments, enabling agents to optimize collective objectives rather than selfish returns. The seminal Value Decomposition (VD) network \cite{sunehag2018value} pioneered this direction but suffered from the ``lazy agent" problem due to its simplistic additive Q-value factorization. Subsequent advances have significantly improved representational capacity through more expressive mixing architectures, including QMIX \cite{rashid2020monotonic} which enforces monotonicity constraints, Value-Decomposition Actor-Critic (VDAC) \cite{su2021value}, and QTRAN \cite{son2019qtran} which lifts monotonicity restrictions. However, most of these approaches require a mixing network to map individual Q-values to the global Q-value, and even with hyper-networks for parameter generation, they still grapple with scalability as agent populations grow \cite{hao2022hierarchical}.

Consensus and communication methods emerged later as a means to balance cooperation efficiency against the CoD challenge. In these paradigms, agents exchange information only with neighbors or selected peers rather than broadcasting globally. Consensus-based approaches leverage sparse communication to achieve policy alignment among agents, often with convergence guarantees under linear function approximation \cite{varshavskaya2009efficient}. However, many such methods require multiple communication rounds, rendering them impractical for real-time applications like ride-sharing where low latency is paramount. Communication-based methods instead focus on designing efficient mechanisms for determining what information to share and with whom. CommNet \cite{sukhbaatar2016learning} pioneered this direction by broadcasting each agent's hidden features derived from local observations. Yet, similar to Mean Field MARL approaches \cite{yang2018mean}, CommNet considers only averaged influences, overlooking fine-grained inter-agent relationships. Subsequent attention-based methods were introduced to weigh the importance of information from different sources \cite{das2019tarmac, li2024context}. However, communication-based methods face nontrivial training difficulties, particularly in early stages when communicated messages carry limited meaningful information. Furthermore, they often involve inherent trade-offs between cooperation efficacy, communication overhead, and message content—issues intimately tied to CoD.

\subsection{Transformer in Reinforcement Learning}
Inspired by the success of the Transformer \cite{vaswani2017attention} in Large Language Models (LLMs), particularly its scalability, strong generalization, and ability to capture long-range dependencies, the architecture has been widely adopted in fields such as computer vision \cite{dosovitskiy2021an}, signal processing \cite{zhao2026automatic}, and reinforcement learning  \cite{wen2022multi}. According to \cite{hu2024transforming}, Transformer-based methods in RL can be broadly categorized into three main applications: (i) architecture enhancement, where Transformers serve as more powerful backbones to improve policy or model capacity; (ii) offline RL, where they learn from sequential trajectory data; and (iii) online RL, where they are integrated to enrich the learning paradigm.

In the context of architecture enhancement, most works leverage the Transformer's ability to model long-term temporal dependencies, particularly in Partially Observable Markov Decision Processes (POMDPs), where solutions previously relied on recurrent architectures such as DRQN \cite{hausknecht2015deep}. For example, the Gated Transformer-XL (GTrXL) \cite{parisotto2020stabilizing} introduces a gating mechanism with pathway skip connections, achieving improved feature extraction from historical trajectories. Another line of research employs Transformers for environment modeling in model-based RL, capitalizing on their strong sequence prediction capabilities. TransDreamer \cite{chen2021transdreamer}, for instance, integrates Transformers into the Dreamer framework to construct a stochastic world model, outperforming conventional RNN-based counterparts \cite{hafnerdream}.

For offline RL, the most prominent methods are the Trajectory Transformer (TT) \cite{janner2021offline} and the Decision Transformer (DT) \cite{chen2021decision}. TT models each feature of state, action, and reward as separate tokens and formulates behavior cloning as a next-token prediction task. In contrast, DT treats each state-action-reward triple as a single entity and frames the problem as a reward-to-go (RTG) guided sequence prediction task. These two works have profoundly influenced subsequent research. For instance, Online DT (ODT) \cite{zheng2022online} extends DT to an offline pre-training and online fine-tuning paradigm, addressing the distribution shift between offline trajectories and online interactions. Q-Learning DT (QDT) \cite{yamagata2023q} further refines dataset quality by relabeling RTGs.

Recently, the success of LLMs has prompted researchers to explore how their generalization properties, such as few-shot learning and rich representations, can benefit RL. Building on transfer learning theory \cite{pan2009survey}, several studies directly leverage pre-trained Transformers to initialize RL policies \cite{reid2022can, li2022pre}. Others introduce masked prediction tasks during offline RL pre-training to enhance feature extraction \cite{liu2022masked, wu2023masked}. \cite{zhao2025triple} proposes a multi-agent pre-training method to improve single-agent RL under data scarcity. Inspired by the in-context learning capabilities of LLMs, \cite{xu2022prompting, lin2022contextual} investigate prompting-based methods that guide policies by conditioning on expert demonstrations. Moreover, \cite{lee2022multi, reed2022generalist} train policies across multiple tasks in a multi-task setting, achieving generalization through direct exposure to diverse environments.

The aforementioned methods predominantly operate in offline or offline-to-online settings, as Transformers in single-agent RL are primarily used to model historical trajectories. In contrast, in MARL, Transformers are well-suited for capturing inter-agent relationships, leading to a surge of online MARL methods. The mat (MAT) \cite{wen2022multi} pioneered this direction by employing an encoder to model agent interactions and a decoder for sequential decision-making. Subsequent works have built upon MAT in various ways. For example, \cite{takayama2025aoad, hu2025pmat} jointly optimize action selection and decision order, highlighting the significant impact of ordering on performance. CommFormer \cite{hu2024learning} integrates graph attention networks to enable communication-efficient decentralized execution in a centralized training paradigm. Other approaches, such as MaskMA \cite{liu2024maskma} and UPDeT \cite{hu2021updet}, focus on learning general action representations with Transformers for agent interaction modeling. However, they still require task-specific architectural design, which limits their practical applicability.

\section{Value Function Definition} \label{sec:definition}
The standard value functions are defined as follows:
\begin{equation}
\begin{aligned}
\mathrm{V}_\pi(\mathcal{O}_t) & = \mathbf{E}_\pi \left[ \sum_{\tau=t}^{\infty} \gamma^{\tau-t} \mathrm{R}(\mathcal{O}_\tau, \mathcal{A}_\tau) \mid \mathcal{O}_t \right]  \ , \\
\mathrm{Q}_\pi(\mathcal{O}_t,\mathcal{A}_t) & = \mathbf{E}_\pi \left[ \sum_{\tau=t}^{\infty} \gamma^{\tau-t} \mathrm{R}(\mathcal{O}_\tau, \mathcal{A}_\tau) \mid \mathcal{O}_t, \mathcal{A}_t \right]  \ , \\
\mathrm{A}_\pi(\mathcal{O}_t, \mathcal{A}_t) & = \mathrm{Q}_\pi(\mathcal{O}_t, \mathcal{A}_t) - \mathrm{V}_\pi(\mathcal{O}_t)  \ ,
\label{eq:value}
\end{aligned}
\end{equation}
where $\mathrm{V}_\pi(\cdot)$, $\mathrm{Q}_\pi(\cdot, \cdot)$, and $\mathrm{A}_\pi(\cdot, \cdot)$ denote the observation value function, the observation-action value function, and the advantage function, respectively.

We further define the Q-value function for a specific agent set $\psi$ as \cite{wen2022multi,kuba2022trust,zhong2024heterogeneous}:
\begin{equation}
\begin{aligned}
\mathrm{Q}^{\psi}_{\pi}(\mathcal{O}_t, a^{\psi}_t) = \mathbf{E}_{\hat{a}^{-\psi}_t \sim \pi} \left[ \mathrm{Q}_{\pi}(\mathcal{O}_t, [a^{\psi}_t, \hat{a}^{-\psi}_t]) \right] \ , \\
\label{eq:value2}
\end{aligned}
\end{equation}
where $-\psi$ denotes the complement of $\psi$. Based on this definition, the advantage function for $\psi$ given the actions of $\Psi$ is defined as
\begin{equation}
\begin{aligned}
\mathrm{A}^{\psi}_{\pi}(\mathcal{O}_t, a^{\Psi}_t, a^{\psi}_t) = \mathrm{Q}^{\psi \cup \Psi}_{\pi}(\mathcal{O}_t, [a^{\psi}_t, a^{\Psi}_t]) - \mathrm{Q}^{\psi}_{\pi}(\mathcal{O}_t, a^{\psi}_t) \ , 
\label{eq:value3}
\end{aligned}
\end{equation}
where $\psi$ and $\Psi$ are disjoint sets. For notational simplicity, we omit the policy symbol $\pi$ in the following sections whenever no ambiguity arises.

\section{Algorithm} \label{sec:algorithm}
The detailed training procedure of the proposed CMAT is presented in Algorithm~\ref{alg:cmat}. Furthermore, we illustrate the conceptual distinction between our method and conventional decentralized MARL approaches in Fig.~\ref{fig:example}, which highlights the key motivation behind our design.

\begin{algorithm}[htbp]
\caption{Consensus Multi-Agent Transformer (CMAT)}
\label{alg:cmat}
\begin{algorithmic}[1]
\Require Number of agents $n$, consensus iterations $m$, PPO hyper-parameters $\epsilon$, $\gamma$, GAE parameter $\lambda$, total training steps $T_{\text{total}}$
\Ensure Trained policy network $\theta$ and critic network $\phi$

\State Initialize network parameters $\theta$, $\phi$, and target critic $\phi^- \gets \phi$

\State \textbf{Procedure ActionSelection}$(\mathcal{O})$ \Comment{Used for experience collection}
\State \quad $\hat{\mathcal{O}} \gets \mathrm{Encoder}(\mathcal{O})$ \Comment{order-independent}
\State \quad $e^0 \gets \mathrm{CriticCompressor}(\hat{\mathcal{O}})$
\State \quad $\hat{V}(\mathcal{O}) \gets \mathrm{CriticMLP}(e^0)$
\State \quad $\mathcal{E} \gets \{e^0\}$
\For{$k = 1$ \textbf{to} $m$}
    \State \quad $e^k \gets \mathrm{Decoder}(e^{k-1}, k)$ \Comment{auto-regressive with positional index $k$}
    \State \quad $\mathcal{E} \gets \mathcal{E} \cup \{e^k\}$
\EndFor
\State \quad $c \gets \mathrm{ActorCompressor}(\mathcal{E})$
\For{$i = 1$ \textbf{to} $n$}
    \State \quad $a^i \sim \pi^i_\theta(\cdot \mid \hat{o}^i, c)$ \Comment{ActorMLP$([\hat{o}^i; c])$}
\EndFor
\State \quad \textbf{return} $\mathcal{A} = \{a^1, \dots, a^n\}$, $\hat{V}(\mathcal{O})$

\Statex
\State \textbf{A.Training}
\While{training steps $< T_{\text{total}}$}
    \State Collect trajectory $\tau = \{(\mathcal{O}_t, \mathcal{A}_t, R_t, \mathcal{O}_{t+1})\}$ by calling \textbf{ActionSelection}$(\mathcal{O}_t)$
    \State Compute advantages $\hat{A}_t$ using GAE with $\hat{V}_{\phi^-}$ and $\lambda$
    \State Compute critic loss:
    \State \qquad $\mathcal{L}_\mathrm{critic}(\phi) \gets \mathbb{E}_t \left[ (R_t + \gamma \hat{V}_{\phi^-}(\mathcal{O}_{t+1}) - \hat{V}_\phi(\mathcal{O}_t))^2 \right]$
    \State Compute importance ratio:
    \State \qquad $R_t^i(\theta) \gets \dfrac{\prod_{j=1}^n \pi^j_\theta(a^j_t \mid \hat{o}^j_t, c_\theta)}{\prod_{j=1}^n \pi^j_{\theta^-}(a^j_t \mid \hat{o}^j_t, c_{\theta^-})}$
    \State Compute actor loss:
    \State \qquad $\mathcal{L}_\mathrm{actor}(\theta) \gets \mathbb{E}_{i,t} \left[ \min\!\left(\mathcal{R}_t^i(\theta)\hat{A}_t, \text{CLIP}(\mathcal{R}_t^i(\theta), 1-\epsilon, 1+\epsilon)\hat{A}_t\right) \right]$
    \State Update $\theta$ and $\phi$ by minimizing $\mathcal{L}_\mathrm{actor} + \mathcal{L}_\mathrm{critic}$
    \State Soft-update target network: $\phi^- \gets \tau \phi + (1-\tau) \phi^-$
\EndWhile

\Statex
\State \textbf{B. Fine-tuning} \Comment{Continue from current $\theta$, $\phi$}
\If{Consensus Enhancement}
    \State Freeze all Actor-MLP layers
    \State Continue training by updating only Critic-MLP, Decoder, and Actor-Compressor
\ElsIf{Action Policy Enhancement}
    \State Freeze Encoder, Decoder, Critic-Compressor, and Actor-Compressor
    \State Continue training by updating only Critic-MLP and all Actor-MLP layers
\EndIf

\State \textbf{return} Trained policy network $\theta$ and critic network $\phi$
\end{algorithmic}
\end{algorithm}

\section{Experiment Supplement}
\subsection{Experiment Configurations} \label{sec:configuration}

For all evaluated scenarios, the model configurations follow those established in MAT \cite{wen2022multi}, with the exception of setting the rollout threads to 8 due to hardware limitations. These configurations are also adopted for comparative methods unless their original papers or official repositories specify alternative setups. Our implementation is built upon the official MAT repository, available at \url{https://github.com/PKU-MARL/Multi-Agent-Transformer}.

It is worth noting that certain experimental results, particularly those on StarCraft II, may deviate from the originally reported figures in MAT. These discrepancies arise from differences in random seeds and software versions. To ensure reproducibility, we detail the exact software versions used in our experiments. These version specifications apply uniformly to all comparative methods:
\begin{itemize}[left=0pt]
\item \textbf{StarCraft II}: PySC2 version 4.0.0, SMAC version 1.0.0, with underlying StarCraft II game version Base55958.
\vspace{0.1cm}
\item \textbf{Multi-Agent MuJoCo}: MuJoCo version 3.4.0, PettingZoo version 1.25.0. The evaluated robotic environments include HalfCheetah-v2, Ant-v2, and Walker2D-v2.
\vspace{0.1cm}
\item \textbf{Google Research Football}: GFootball version 2.10.2.
\end{itemize}

All models were trained using the PyTorch framework \cite{paszke2019pytorch} on a workstation running Windows 11, equipped with an Intel(R) Core(TM) i7-14700KF processor and an NVIDIA RTX 4080 graphics card. The GPU occupation is around 0.5-1.5 GB during the whole training process.

\emph{Here we want to emphasis due to hardware limitations, we were unable to run the original simulation versions with the recommended number of rollout threads specified in MAT \cite{wen2022multi}, as these settings frequently caused system instability on our device. While we acknowledge that this adjustment may prevent the evaluated methods from achieving their theoretical performance upper bounds, the comparison remains fair, as all methods were tested under identical modified conditions.}

\subsection{Ablation Study and Sensitivity Analysis} \label{sec:ablation}

To further illustrate the efficacy of each proposed module, we conduct a series of ablation studies and sensitivity analyses of hyper-parameters, including:
\begin{itemize}[left=0pt]
    \item \textbf{Consensus Mixture Versus Last Consensus:} We propose the decoder (actor) compressor to mix the middle-generated consensus $\mathcal{E}$ to avoid information loss during the auto-regressive decoding process. Here, we compare the performance of our mixture method against the direct use of the last generated consensus vector $e^m$.

    \vspace{0.1cm}
    \item \textbf{Impact of Consensus Iteration Times:} In our method, there is only one newly introduced hyperparameter compared to CMAT, namely the decoder iteration times $m$. We default this to the number of agents $n$, with the intuition that in the worst-case scenario, CMAT can degrade to MAT. Here, we additionally compare the performance under $m=0, \lfloor \frac{n}{2} \rfloor , 2n$.
\end{itemize}

\begin{figure*}[t!]
\centering
\subfloat[Legend]{\includegraphics[width=0.8\textwidth]{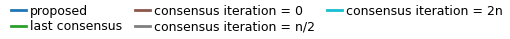}} \\
\subfloat[MMM2]{\includegraphics[width=0.33\textwidth]{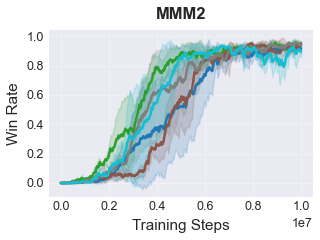}}
\subfloat[6h vs 8z]{\includegraphics[width=0.33\textwidth]{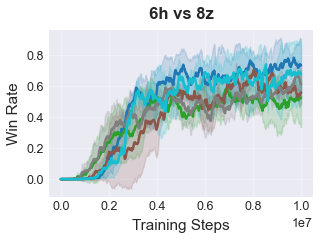}}
\subfloat[3s5z vs 3s6z]{\includegraphics[width=0.33\textwidth]{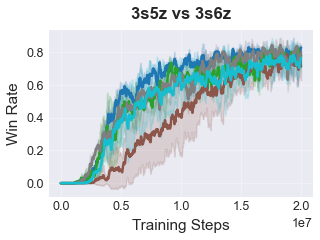}} \\
\subfloat[8$\times$1-Agent Ant]{\includegraphics[width=0.33\textwidth]{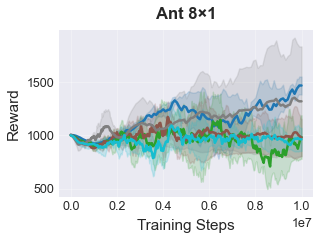}}
\subfloat[6$\times$1-Agent HalfCheetah]{\includegraphics[width=0.33\textwidth]{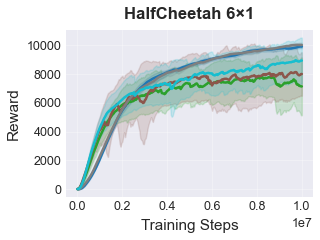}}
\subfloat[6$\times$1-Agent Walker2d]{\includegraphics[width=0.33\textwidth]{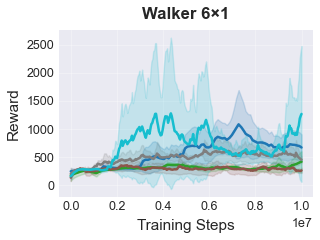}} \\
\subfloat[academy counterattack easy]{\includegraphics[width=0.33\textwidth]{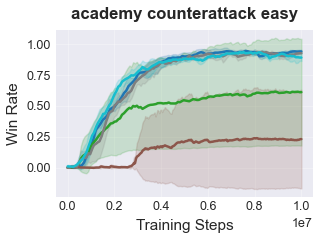}}
\subfloat[academy pass and shoot with keeper]{\includegraphics[width=0.33\textwidth]{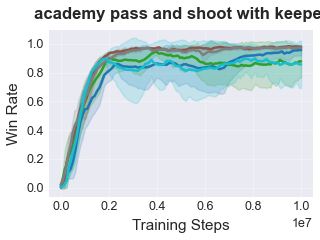}}
\subfloat[academy 3 vs 1 with keeper]{\includegraphics[width=0.33\textwidth]{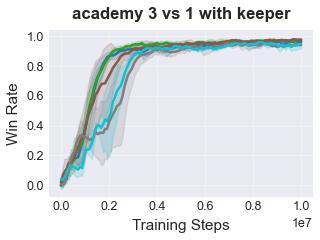}} \\
\caption{Ablation Study under 5 Random Seeds}
\label{fig:ablation}
\end{figure*}

The experimental results are shown in Fig. \ref{fig:ablation}. We observe that when directly using the last consensus instead of mixing all consensus outputs via our Actor-Compressor, the model performance decreases in most scenarios, suggesting that the decoder auto-regression process may lose some useful information from previous generations. This confirms the value of the Actor-Compressor as a mechanism for aggregating the full trajectory of consensus vectors rather than relying on the final one. Regarding the consensus iteration time, we find that selecting $n$ (the number of agents) as the iteration count is a proper choice: too few iterations may be insufficient to generate a good consensus among agents, while too many iterations may introduce excessive noise and redundant information, thereby increasing the training difficulty of the Actor-Compressor. Intuitively, choosing the number of iterations equal to $n$ aligns with the decoding process of MAT, allowing each agent to sufficiently adjust its action in response to the actions of others.

\section{Why the CMAT Architecture Is Effective}
\label{app:theory}

This appendix gives a rigorous, self-contained account of why the \emph{deterministic}-consensus architecture that we actually deploy is effective. In contrast to the more general analysis of Appendix~\ref{app:theory-future}, which is stated for a stochastic consensus policy and serves mainly to motivate future extensions, the results here make no probabilistic-coordination and no tabular-convergence assumptions. They are statements about the representational structure induced by conditioning every action head on a single deterministic consensus vector $c=\mu_\theta(\mathcal{O})$, and about the iterative refinement that produces $c$. Throughout, we separate what is genuinely a property of the architecture from what merely reflects added capacity, and we flag every assumption a reader might contest. The central thesis of this appendix is one of \emph{representational efficiency}: that iterative refinement with a single reused block can express the same coordination computation that an encoder-only controller can express only by stacking, and therefore by growing, its parameters. Every result here is accordingly a statement about what the network structure is able to represent at a given parameter budget, not about whether training converges to that representation; we do not claim any global-convergence or optimization guarantee for the deep model, and questions of trainability are settled by our experiments rather than by any proof below.

\paragraph{Scope of the theory: representation, not optimization.} To prevent a natural misreading, we state the boundary of these results explicitly. Every statement in this appendix is about \emph{representational capacity}: what the deterministic-consensus decoding structure can express, how well its best-case expressible policy approximates the optimum (Theorems~\ref{thm:collapse}--\ref{thm:subopt}), why iterative refinement is the right way to represent a self-consistent consensus (Theorem~\ref{thm:unroll}, Proposition~\ref{prop:meanpool}), and why weight-shared iteration represents the same coordination computation as depth at a smaller parameter budget (Theorem~\ref{thm:depth}). None of these results speaks to \emph{optimization}: whether a particular training procedure on our finite network actually finds the good representation the capacity results permit is a separate question, settled by our empirical study rather than by any proof here. The results are best read as structural justifications for the architecture, made precise, rather than as guarantees of algorithmic effectiveness.

We address the two questions that are specific to CMAT. The mitigation of the action-space Curse of Dimensionality is shared by any attention-based centralized controller and is not claimed as a contribution unique to CMAT; we nonetheless include it below because it is the scaffold on which the two CMAT-specific arguments rest. The two questions are: (i) why the consensus must be produced by \emph{iterative} refinement rather than a single aggregation such as mean-pooling; and (ii) why conditioning on a deterministic consensus is more effective than an encoder-only controller with independent action heads at a comparable parameter budget.

\subsection{Setup: Consensus-Separable Team Value Functions}
Recall the single-agent multi-action MDP of the Preliminaries, with team action $\mathcal{A}=(a^1,\dots,a^n)\in\bar{\mathcal{A}}=\prod_i\mathcal{A}^i$ and optimal team action-value $\mathrm{Q}^\star(\mathcal{O},\mathcal{A})$. The action-space Curse of Dimensionality is the statement that the greedy step $\max_{\mathcal{A}\in\bar{\mathcal{A}}}\mathrm{Q}^\star(\mathcal{O},\mathcal{A})$ ranges over $\prod_i|\mathcal{A}^i|$ joint actions, exponential in $n$. The CMAT decoder produces each action from its own observation embedding $\hat o^i$ and the shared consensus $c$ through Eq.~\eqref{eq:actor_MLP}. The representational premise that makes this structure exact is that the team value is additively separable once conditioned on a low-dimensional coordination statistic.

\begin{definition}[$d$-consensus-separability]
\label{def:sep}
A team value function $\mathrm{Q}(\mathcal{O},\mathcal{A})$ is \emph{$d$-consensus-separable} if there exist a coordination map $f:\mathcal{O}\to\mathcal{C}\subseteq\mathbb{R}^d$ and local value functions $q^i$ such that
\begin{equation}
\mathrm{Q}(\mathcal{O},\mathcal{A})=\sum_{i=1}^n q^i(\hat o^i,a^i,c),\qquad c=f(\mathcal{O}),\quad \forall\,\mathcal{O},\mathcal{A}.
\end{equation}
For a general $\mathrm{Q}$, the best $d$-dimensional consensus-separable approximation error is
\begin{equation}
\varepsilon_d(\mathrm{Q})=\inf_{f,\{q^i\}}\ \sup_{\mathcal{O},\mathcal{A}}\ \Bigl|\,\mathrm{Q}(\mathcal{O},\mathcal{A})-\sum_{i=1}^n q^i(\hat o^i,a^i,f(\mathcal{O}))\,\Bigr|.
\label{eq:epsd}
\end{equation}
\end{definition}

This is the learned, nonlinear, state-dependent analogue of the coupling signals used in classical operations-research decompositions of large-scale sequential decision problems: the Lagrangian (dual price) in weakly coupled MDPs and restless bandits \citep{whittle1988restless, adelman2008relaxations}, and the population mean field in mean-field control \citep{carmona2018probabilistic}. In those methods a shared signal, a price $\lambda$ or a first moment, renders the $n$ per-component problems conditionally independent. Here $c=f(\mathcal{O})$ plays the same decoupling role but is computed by attention on the true state and is not restricted to be linear or to a first moment. The quantity $\varepsilon_d$ in Eq.~\eqref{eq:epsd} is a deterministic, distribution-free measure of how much coordination the problem intrinsically requires, and it is non-increasing in $d$.

\subsection{Collapse of the Exponential Joint Maximization}
\begin{theorem}[Argmax collapse]
\label{thm:collapse}
If $\mathrm{Q}$ is $d$-consensus-separable (Def.~\ref{def:sep}), then for every $\mathcal{O}$,
\begin{equation}
\max_{\mathcal{A}\in\bar{\mathcal{A}}}\mathrm{Q}(\mathcal{O},\mathcal{A})=\sum_{i=1}^n\max_{a^i\in\mathcal{A}^i}q^i(\hat o^i,a^i,f(\mathcal{O})),
\end{equation}
and $\mathcal{A}^\star(\mathcal{O})=\bigl(\arg\max_{a^i}q^i(\hat o^i,a^i,f(\mathcal{O}))\bigr)_{i=1}^n$ is a joint maximizer. Hence the greedy team action is obtained from one evaluation of the consensus $c=f(\mathcal{O})$ followed by $n$ independent local maximizations, at cost $O\!\left(\sum_i|\mathcal{A}^i|\right)$ rather than $O\!\left(\prod_i|\mathcal{A}^i|\right)$.
\end{theorem}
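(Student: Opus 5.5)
The plan is to fix an arbitrary observation $\mathcal{O}$ and freeze the consensus $c=f(\mathcal{O})$. The coordination map $f$ depends only on $\mathcal{O}$, and each embedding $\hat o^i$ is a function of $\mathcal{O}$ alone (Eq.~\eqref{eq:encoder}). So once $\mathcal{O}$ is fixed, $c$ and every $\hat o^i$ are constants that do not vary with the team action $\mathcal{A}$. By Definition~\ref{def:sep}, the map $\mathcal{A}\mapsto\mathrm{Q}(\mathcal{O},\mathcal{A})$ then equals $\sum_i g_i(a^i)$ with $g_i(a):=q^i(\hat o^i,a,c)$. This is a sum of functions, each depending on a single coordinate of the product space $\bar{\mathcal{A}}=\prod_i\mathcal{A}^i$. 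The whole theorem reduces to the elementary fact that the maximum of such a separable sum over a product set is the sum of the coordinatewise maxima.

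The key steps, in order, are as follows.

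\textbf{(1) Attainment.} I will note that the maxima exist. The natural standing reading is that each $\mathcal{A}^i$ is finite (the cost claim $O(\sum_i|\mathcal{A}^i|)$ presupposes it). For compact continuous action spaces I would add continuity of $q^i$ in $a^i$, or else state the identity with suprema.

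\textbf{(2) Upper bound.} For any $\mathcal{A}$, $g_i(a^i)\le\max_{a}g_i(a)$ holds termwise. Summing gives $\mathrm{Q}(\mathcal{O},\mathcal{A})\le\sum_i\max_a g_i(a)$, and taking the maximum over $\mathcal{A}$ yields ``$\le$''.

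\textbf{(3) Achievability.} Choose $a^{i\star}\in\arg\max_a g_i(a)$ independently for each $i$. The tuple $\mathcal{A}^\star=(a^{1\star},\dots,a^{n\star})$ lies in $\bar{\mathcal{A}}$ because the set is a product, so there are no joint constraints. It attains the bound, which gives ``$\ge$'' and shows that $\mathcal{A}^\star(\mathcal{O})$ is a joint maximizer. If an $\arg\max$ is not a singleton, any selection works.

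\textbf{(4) Cost.} Computing $\mathcal{A}^\star$ takes one evaluation of $f$ and the $n$ embeddings, followed by $|\mathcal{A}^i|$ evaluations of $q^i$ per agent. That is $O(\sum_i|\mathcal{A}^i|)$ local evaluations, compared with $\prod_i|\mathcal{A}^i|$ for exhaustive joint search.

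I do not expect a real obstacle; the argument is an interchange of $\max$ and $\sum$. The one point to handle carefully is the justification in (2)--(3) that $c$ does not depend on $\mathcal{A}$. If the consensus were allowed to depend on the actions (as in MAT's autoregressive decoding), the interchange would fail. So I will explicitly invoke that in Definition~\ref{def:sep}, $c=f(\mathcal{O})$ is a function of the observation only, matching the deterministic $\mu_\theta(\mathcal{O})$ of Eq.~\eqref{eq:ratio}. I will also remark that the cost accounting treats one evaluation of $f$ and of each $q^i$ as unit cost, so the comparison concerns the number of value-function queries, not network FLOPs.
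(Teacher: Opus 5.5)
Your proposal is correct and follows the same argument as the paper. The paper fixes $\mathcal{O}$, observes that $c=f(\mathcal{O})$ and the embeddings $\hat o^i$ are then constants, and uses that a separable sum over the product domain $\bar{\mathcal{A}}$ is maximized coordinatewise, with each local maximization enumerating $|\mathcal{A}^i|$ actions. Your explicit upper-bound/achievability split and your remarks on attainment (finite $\mathcal{A}^i$, or suprema otherwise) and on unit-cost evaluations simply make precise what the paper states in a single line.
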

\begin{proof}
Fix $\mathcal{O}$ and set $c=f(\mathcal{O})$. The objective $\sum_i q^i(\hat o^i,a^i,c)$ is a sum whose $i$-th term depends on the decision variable $a^i$ only, since the embeddings $\hat o^i$ and the shared $c$ are determined once $\mathcal{O}$ is fixed. Maximization of a separable sum over a product domain decomposes coordinatewise, $\max_{\mathcal{A}}\sum_i q^i=\sum_i\max_{a^i}q^i$, attained at the tuple of coordinatewise maximizers. Each inner maximization enumerates $|\mathcal{A}^i|$ actions, giving the stated cost.
\end{proof}

\begin{remark}[Exploiting structure, not defeating the curse]
Theorem~\ref{thm:collapse} does not eliminate the Curse of Dimensionality unconditionally. When $\varepsilon_d(\mathrm{Q}^\star)>0$ the collapse is only approximate (Theorem~\ref{thm:subopt}). The precise claim is that CMAT \emph{exploits} two structural properties, permutation symmetry across agents and a low-dimensional coupling channel, to trade an exponential joint maximization for a linear one, exactly the trade made by the classical decomposition methods above. CMAT's contribution is to \emph{learn} the coupling statistic $c$ rather than hand-design it.
\end{remark}

\subsection{Coordination Dimension versus Suboptimality}
\begin{theorem}[Suboptimality of consensus decoding, conditional on realizability]
\label{thm:subopt}
\emph{(Realizability premise.)} Assume the optimal team value admits a $d$-consensus-separable approximation with error $\varepsilon_d(\mathrm{Q}^\star)$, and that the CMAT decoder actually realizes such a best approximation, that is, there exist a coordination map $f$ and heads $q^i$ within the decoding structure attaining the infimum in Eq.~\eqref{eq:epsd}. \emph{(Conclusion.)} Let $\mathrm{Q}^\star$ be the optimal team action-value and let $\widehat{\mathrm{Q}}(\mathcal{O},\mathcal{A})=\sum_i q^i(\hat o^i,a^i,f(\mathcal{O}))$ be this approximation, so that $\|\widehat{\mathrm{Q}}-\mathrm{Q}^\star\|_\infty=\varepsilon_d(\mathrm{Q}^\star)$. Let $\hat\pi$ be the deterministic policy that plays the greedy team action of $\widehat{\mathrm{Q}}$, that is, the collapse of Theorem~\ref{thm:collapse}. Then for every state,
\begin{equation}
V^{\hat\pi}(\mathcal{O})\ \ge\ V^\star(\mathcal{O})-\frac{2\,\varepsilon_d(\mathrm{Q}^\star)}{1-\gamma}.
\end{equation}
\end{theorem}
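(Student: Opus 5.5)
The plan is the classical greedy-policy-from-approximate-$\mathrm{Q}$ argument (Singh--Yee style), with Theorem~\ref{thm:collapse} used only to identify $\hat\pi$ with the greedy policy of $\widehat{\mathrm{Q}}$. Write $\varepsilon=\varepsilon_d(\mathrm{Q}^\star)$. Fix a state $\mathcal{O}$ and let $\hat{\mathcal{A}}=\hat\pi(\mathcal{O})$, which by Theorem~\ref{thm:collapse} is a joint maximizer of $\widehat{\mathrm{Q}}(\mathcal{O},\cdot)$. Let $\mathcal{A}^\star$ be a maximizer of $\mathrm{Q}^\star(\mathcal{O},\cdot)$, so that $V^\star(\mathcal{O})=\mathrm{Q}^\star(\mathcal{O},\mathcal{A}^\star)$.

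\textbf{Step 1: one-step gap.} Using $\|\widehat{\mathrm{Q}}-\mathrm{Q}^\star\|_\infty=\varepsilon$ twice, together with greediness of $\hat{\mathcal{A}}$ for $\widehat{\mathrm{Q}}$,
\begin{equation*}
\mathrm{Q}^\star(\mathcal{O},\hat{\mathcal{A}})\ \ge\ \widehat{\mathrm{Q}}(\mathcal{O},\hat{\mathcal{A}})-\varepsilon\ \ge\ \widehat{\mathrm{Q}}(\mathcal{O},\mathcal{A}^\star)-\varepsilon\ \ge\ \mathrm{Q}^\star(\mathcal{O},\mathcal{A}^\star)-2\varepsilon = V^\star(\mathcal{O})-2\varepsilon .
\end{equation*}
Equivalently, $\mathrm{T}^{\hat\pi}V^\star\ge V^\star-2\varepsilon$ pointwise, where $\mathrm{T}^{\hat\pi}V(\mathcal{O})=\mathrm{R}(\mathcal{O},\hat\pi(\mathcal{O}))+\gamma\,\mathbf{E}_{\mathcal{O}'\sim\mathrm{P}(\cdot\mid\mathcal{O},\hat\pi(\mathcal{O}))}V(\mathcal{O}')$ is the policy-evaluation operator of the single-agent MDP $\langle\mathcal{O},\bar{\mathcal{A}},\mathrm{R},\mathrm{P},\gamma\rangle$. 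This uses the identity $\mathrm{Q}^\star(\mathcal{O},\mathcal{A})=\mathrm{R}+\gamma\,\mathbf{E}V^\star(\mathcal{O}')$.

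\textbf{Step 2: propagate the gap.} $\mathrm{T}^{\hat\pi}$ is monotone and satisfies $\mathrm{T}^{\hat\pi}(V-k)=\mathrm{T}^{\hat\pi}V-\gamma k$ for constants $k$. Iterating Step 1 therefore gives, by induction,
\begin{equation*}
(\mathrm{T}^{\hat\pi})^kV^\star\ \ge\ V^\star-2\varepsilon\sum_{j=0}^{k-1}\gamma^j .
\end{equation*}
Since $\mathrm{T}^{\hat\pi}$ is a $\gamma$-contraction with fixed point $V^{\hat\pi}$, letting $k\to\infty$ yields $V^{\hat\pi}\ge V^\star-2\varepsilon/(1-\gamma)$. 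An equivalent route is the telescoping/performance-difference identity $V^\star-V^{\hat\pi}=\sum_t\gamma^t\,\mathbf{E}_{\hat\pi}\bigl[V^\star(\mathcal{O}_t)-\mathrm{Q}^\star(\mathcal{O}_t,\hat\pi(\mathcal{O}_t))\bigr]$, where each term is bounded by $2\varepsilon$ from Step 1.

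\textbf{Main obstacle.} The analytic part is routine. The delicate points are technical: attainment of the infimum in Eq.~\eqref{eq:epsd} is supplied by the realizability premise, and boundedness of $\mathrm{R}$ is needed so that all value functions are finite and the contraction argument applies. I will state these explicitly. If attainment fails, the same proof with $\varepsilon+\delta$ for arbitrary $\delta>0$ gives the bound in the limit. I will also note that $\hat\pi$ is deterministic and stationary, so $V^{\hat\pi}$ is the unique fixed point of $\mathrm{T}^{\hat\pi}$.
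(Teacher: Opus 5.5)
Your proposal is correct and matches the paper's approach: the paper also uses Theorem~\ref{thm:collapse} to identify $\hat\pi$ as the greedy policy of $\widehat{\mathrm{Q}}$, and then applies the classical Singh--Yee / Bertsekas--Tsitsiklis bound for greedy policies of an approximate $\mathrm{Q}$. Your Steps 1--2 prove that bound inline (one-step $2\varepsilon$ gap, then propagation through the contraction $\mathrm{T}^{\hat\pi}$) instead of citing it, and your remarks on attainment and bounded rewards tidy up hypotheses the paper leaves implicit.
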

\begin{proof}
By Theorem~\ref{thm:collapse}, $\hat\pi$ selects $\arg\max_{\mathcal{A}}\widehat{\mathrm{Q}}(\mathcal{O},\mathcal{A})$, so it is greedy with respect to $\widehat{\mathrm{Q}}$. The classical bound on greedy policies of approximate action-value functions \citep{singh1994upper, bertsekas1996neuro} states that $\|\widehat{\mathrm{Q}}-\mathrm{Q}^\star\|_\infty\le\varepsilon$ implies the greedy policy of $\widehat{\mathrm{Q}}$ is $\tfrac{2\varepsilon}{1-\gamma}$-suboptimal. Taking $\varepsilon=\varepsilon_d(\mathrm{Q}^\star)$ gives the claim.
\end{proof}

The greedy policy $\hat\pi$ is representable within the CMAT decoding structure, since each action head is a function of $(\hat o^i,c)$ alone. Because $\varepsilon_d(\mathrm{Q}^\star)$ is non-increasing in $d$ and vanishes once $d$ reaches the intrinsic coordination dimension of the problem, Theorem~\ref{thm:subopt} both certifies the deterministic architecture (small $\varepsilon_d$ implies near-optimal decoding) and gives quantitative guidance for choosing the consensus width $d$. It also pinpoints where CMAT should excel. In large-scale operational systems such as matching, queueing, fleet and capacity allocation, and congestion pricing, inter-agent coupling is transmitted mainly through low-dimensional aggregates such as aggregate supply-demand, prices, or congestion levels, so $\varepsilon_d$ decays quickly in $d$ and a modest consensus suffices even for large $n$. We stress that this is a conditional, ``good representation implies good policy'' result: it presumes the realizability premise, that the learned $f$ and $q^i$ reach the best $d$-consensus-separable fit with error $\varepsilon_d(\mathrm{Q}^\star)$. It is a statement about the representational capacity of the decoding structure, not a claim that PPO drives the realized approximation error down to $\varepsilon_d(\mathrm{Q}^\star)$; whether training actually attains this fit is an empirical question, addressed by our experiments rather than by this bound.

\subsection{Why Iterate: Consensus as Unrolled Coordination Optimization}
The consensus that supports Theorem~\ref{thm:collapse} cannot be an arbitrary one-shot summary of $\mathcal{O}$. The local maximizers $\hat a^i(c)=\arg\max_{a^i}q^i(\hat o^i,a^i,c)$ depend on $c$, so for the decomposition to be self-consistent the coordination statistic must agree with the actions it induces. A valid consensus is therefore a fixed point
\begin{equation}
c^\star=\Phi\bigl(\mathcal{O},(\hat a^i(c^\star))_{i=1}^n\bigr)=:\mathcal{T}_{\mathcal{O}}(c^\star),
\label{eq:fixedpoint}
\end{equation}
where $\Phi$ re-aggregates the induced per-agent responses into the coordination statistic. This is the self-consistency that defines equilibrium aggregates in mean-field and weakly coupled formulations. A single feed-forward aggregation, whether the initial vector $e^0$ or any mean-pooling, computes $\mathcal{T}_{\mathcal{O}}(c_0)$ once from an uninformed $c_0$ and does not solve Eq.~\eqref{eq:fixedpoint}.

We therefore model the decoder as an unrolled optimizer. We adopt this view rather than a contraction argument because it does not rely on the contraction constant of attention, which is not automatically below one (Remark~\ref{rem:contraction}). We state the view as an explicit modeling assumption rather than a derived property, and we are careful about the direction of the claim: we posit that the decoder can be read as unrolling a coordination solver, not that a trained attention block provably performs exact gradient ascent on a specific potential.

\begin{assumption}[Unrolled-optimization view of the decoder]
\label{ass:unroll}
We model the refinement as follows. We posit a coordination potential $J(\cdot\,;\mathcal{O}):\mathbb{R}^d\to\mathbb{R}$, bounded above and $L$-smooth in $c$, whose stationary points are the self-consistent consensus vectors of Eq.~\eqref{eq:fixedpoint}, and we treat one decoder refinement step as implementing a gradient (or proximal) ascent step $c_{k+1}=c_k+\eta_k\nabla_c J(c_k;\mathcal{O})$ with $\eta_k\le 1/L$. This is an interpretive assumption about the computation the block is designed and trained to carry out, not a proven identity between the softmax-attention map and $\nabla_c J$.
\end{assumption}

This is the standard modeling premise of the algorithm-unrolling, learning-to-optimize, and deep-equilibrium literatures \citep{gregor2010learning, marino2018iterative, bai2019deep}, in which a recurrent, weight-shared block is designed by unrolling the iterations of a solver for an implicitly defined objective and is then trained end to end. Following that tradition, we invoke the assumption only in the forward direction, to motivate why iterating helps and to predict the qualitative behaviour that follows (monotone-then-saturating improvement, Theorem~\ref{thm:unroll}). We do not claim the converse, that the learned decoder recovers the exact ascent dynamics of any particular $J$; the empirical saturation reported in Appendix~\ref{sec:ablation} is the evidence we offer that the modeled behaviour actually obtains.

\begin{theorem}[Monotone improvement of iteration; one-shot is insufficient at a fixed width]
\label{thm:unroll}
Under Assumption~\ref{ass:unroll}, the decoder iterates satisfy
\begin{equation}
J(c_{k+1};\mathcal{O})\ \ge\ J(c_k;\mathcal{O})+\tfrac{\eta_k}{2}\|\nabla_c J(c_k;\mathcal{O})\|^2,
\end{equation}
so $J(c_k;\mathcal{O})$ is non-decreasing in $k$ and converges, and for constant step size $\min_{k\le m}\|\nabla_c J(c_k;\mathcal{O})\|^2= O(1/m)$. In particular the one-shot consensus $e^0$ corresponds to $k=1$ and is generally not stationary, incurring a strictly positive gap $J^\star-J(e^0;\mathcal{O})>0$ whenever $\nabla_c J(c_0;\mathcal{O})\neq 0$; this gap is reduced monotonically by further iterations.
\end{theorem}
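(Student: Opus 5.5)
The plan is to run the standard ascent-lemma argument for gradient methods on an $L$-smooth function, applied to $J(\cdot\,;\mathcal{O})$ with $\mathcal{O}$ held fixed; Assumption~\ref{ass:unroll} supplies the smoothness and the step-size bound.

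\textbf{Step 1: one-step improvement.} Since $J(\cdot\,;\mathcal{O})$ is $L$-smooth, the quadratic lower bound
\begin{equation*}
J(c';\mathcal{O})\ \ge\ J(c;\mathcal{O})+\langle\nabla_c J(c;\mathcal{O}),c'-c\rangle-\tfrac{L}{2}\|c'-c\|^2
\end{equation*}
holds for all $c,c'$. Substituting $c'=c_{k+1}=c_k+\eta_k\nabla_c J(c_k;\mathcal{O})$ gives
\begin{equation*}
J(c_{k+1};\mathcal{O})\ \ge\ J(c_k;\mathcal{O})+\eta_k\bigl(1-\tfrac{L\eta_k}{2}\bigr)\|\nabla_c J(c_k;\mathcal{O})\|^2 .
\end{equation*}
Because $\eta_k\le 1/L$, the factor $1-L\eta_k/2$ is at least $1/2$, which yields the displayed inequality with constant $\eta_k/2$.

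\textbf{Step 2: monotonicity and convergence.} The increments from Step 1 are nonnegative, so $J(c_k;\mathcal{O})$ is non-decreasing in $k$. By the assumption $J$ is bounded above, say by $J^\star$. A bounded monotone sequence converges, so $J(c_k;\mathcal{O})$ converges.

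\textbf{Step 3: the $O(1/m)$ rate.} Take a constant step $\eta\le 1/L$ and telescope the inequality over $k=0,\dots,m-1$:
\begin{equation*}
\tfrac{\eta}{2}\sum_{k<m}\|\nabla_c J(c_k;\mathcal{O})\|^2\ \le\ J^\star-J(c_0;\mathcal{O}).
\end{equation*}
The minimum of $m$ nonnegative terms is at most their average, so the minimum of the squared gradient norm is at most $2\bigl(J^\star-J(c_0;\mathcal{O})\bigr)/(\eta m)=O(1/m)$. To get the range $k\le m$ as stated, I would telescope over the $m+1$ indices instead; this changes only the constant.

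\textbf{Step 4: the one-shot claim.} This is where care is needed, because the statement's indexing (``$e^0$ corresponds to $k=1$'') is loose. I would make the identification explicit: $e^0=\mathcal{T}_{\mathcal{O}}(c_0)$ is the output of a single step from an uninformed $c_0$, i.e.\ $c_1$. The argument then has three parts.
\begin{itemize}
\item If $\nabla_c J(c_0;\mathcal{O})\neq 0$, Step 1 gives $J(c_1)>J(c_0)$, but by itself this does not show that $c_1$ is not stationary.
\item The positivity of $J^\star-J(e^0;\mathcal{O})$ should therefore be read with $J^\star$ the supremum, or the limit value $\lim_k J(c_k)$. The gap is strictly positive exactly when the iteration has not terminated at $e^0$. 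A sufficient condition is $\nabla_c J(e^0;\mathcal{O})\neq0$, since then Step 1 gives $\lim_k J(c_k)\ge J(c_2)>J(e^0)$.
\item Monotone reduction of this gap under further iterations is then exactly Step 2.
\end{itemize}

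The main obstacle is making the ``generally not stationary'' clause rigorous. I would state it as the conditional: if $\nabla_c J(e^0;\mathcal{O})\neq 0$, then the gap is positive. I would add a remark that stationarity of a one-shot aggregation is a non-generic coincidence, and keep this as a remark rather than a proved statement. Everything else is routine.
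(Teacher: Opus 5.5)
Your Steps 1--3 are the paper's proof: the ascent form of the descent lemma with $\eta_k\le 1/L$, monotonicity plus boundedness for convergence, and telescoping with ``min $\le$ average'' for the $O(1/m)$ rate.

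The difference is in Step 4. The paper settles the one-shot clause in one sentence, asserting that $e^0=c_1$ is stationary only if $\nabla_c J(c_0)=0$. You are right that monotone improvement alone does not give this. But you then concede more than you need to: you replace the stated hypothesis $\nabla_c J(c_0;\mathcal{O})\neq 0$ by $\nabla_c J(e^0;\mathcal{O})\neq 0$, and you demote non-stationarity to a heuristic remark. The missing idea is that $L$-smoothness limits how much one step can change the gradient. Since $\|c_1-c_0\|=\eta_0\|\nabla_c J(c_0)\|$,
\[
\|\nabla_c J(c_1)\|\ \ge\ \|\nabla_c J(c_0)\|-\|\nabla_c J(c_1)-\nabla_c J(c_0)\|\ \ge\ (1-L\eta_0)\,\|\nabla_c J(c_0)\| .
\]
So for $\eta_0<1/L$, the stated hypothesis $\nabla_c J(c_0)\neq 0$ already forces $e^0=c_1$ to be non-stationary. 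Then either your own sufficient condition, or the fact that a maximizer of a differentiable function on $\mathbb{R}^d$ is stationary, gives $J^\star-J(e^0;\mathcal{O})>0$.

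Your caution is still justified at the boundary that Assumption~\ref{ass:unroll} permits. At $\eta_0=1/L$ the implication fails. Take $J(c)=-\tfrac{L}{2}\|c\|^2$: one step sends every $c_0\neq 0$ to the maximizer $c_1=0$, so $\nabla_c J(c_0)\neq 0$ while the gap is zero. The final clause, read literally with $\eta_k\le 1/L$, is therefore false, and the paper's one-line justification breaks down exactly there. The provable version requires $\eta_0<1/L$ and uses the inequality above. Your ``non-generic coincidence'' remark is also off in both directions. For $\eta_0<1/L$ and $\nabla_c J(c_0)\neq 0$, a stationary $c_1$ is impossible. At $\eta_0=1/L$, it can happen for every starting point.
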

\begin{proof}
For an $L$-smooth $J$ and a step $c_{k+1}=c_k+\eta_k\nabla_c J(c_k)$ with $\eta_k\le 1/L$, the descent lemma in ascent form gives $J(c_{k+1})\ge J(c_k)+\eta_k(1-\tfrac{L\eta_k}{2})\|\nabla_c J(c_k)\|^2\ge J(c_k)+\tfrac{\eta_k}{2}\|\nabla_c J(c_k)\|^2$ \citep{nesterov2018lectures}. Hence $J(c_k;\mathcal{O})$ is non-decreasing and, being bounded above, converges. Summing the inequality over $k\le m$ and dividing by $\sum_{k}\eta_k$ bounds $\min_{k\le m}\|\nabla_c J(c_k)\|^2$ by $2(J^\star-J(c_0))/(\sum_{k}\eta_k)=O(1/m)$ for constant $\eta_k$. Since $e^0=c_1$ is a single ascent step from an uninformed $c_0$, it is stationary only if $\nabla_c J(c_0)=0$, which fails generically.
\end{proof}

This matches the behaviour observed empirically in Appendix~\ref{sec:ablation}: replacing the iterated consensus by $e^0$ (no iteration) degrades performance, and increasing the number of iterations $m$ yields diminishing returns that saturate, consistent with the non-decreasing and converging sequence $J(c_k;\mathcal{O})$.

\begin{remark}[Contraction alternative and an honest caveat]
\label{rem:contraction}
If $\mathcal{T}_{\mathcal{O}}$ is a contraction of modulus $\rho<1$, then it has a unique fixed point $c^\star$ and $\|c_k-c^\star\|\le\rho^k\|c_0-c^\star\|$, a geometric convergence that likewise predicts the empirical saturation. We do not rely on this route in the main statement because a standard softmax attention block is not guaranteed to be a contraction; its Lipschitz constant can exceed one \citep{kim2021lipschitz}. Enforcing contraction requires spectral or normalization constraints, or restricting attention to a neighborhood of $c^\star$ (local contraction). The unrolled-optimization view of Theorem~\ref{thm:unroll} avoids this dependence.
\end{remark}

\begin{proposition}[Mean-pooling captures only a first-order moment]
\label{prop:meanpool}
A single mean-pooling aggregation produces $c=\psi\bigl(\tfrac1n\sum_i\phi(\hat o^i)\bigr)$, a function of the empirical first moment of the embeddings, which is precisely the mean-field statistic and is exact only in the exchangeable, weak-coupling, large-$n$ limit \citep{carmona2018probabilistic}. If the self-consistent consensus $c^\star$ of Eq.~\eqref{eq:fixedpoint} depends on higher-order or pairwise structure of $\{\hat o^i\}$, then at any fixed embedding width mean-pooling cannot represent $c^\star$, whereas the data-dependent weighted aggregation of attention, iterated, can. Quantitatively, exact representation of permutation-invariant set functions by sum or mean pooling requires latent width at least the set cardinality \citep{wagstaff2019limitations}, a blow-up that iterated cross-attention avoids.
\end{proposition}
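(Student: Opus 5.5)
The plan is to separate the Proposition into three claims and handle each with a different tool. These are: (a) a structural description of what mean-pooling computes; (b) a non-representability statement at fixed width when $c^\star$ depends on higher-order structure; and (c) a positive statement that iterated attention can represent such $c^\star$, together with the width lower bound taken from \citep{wagstaff2019limitations}.

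For (a), the argument is direct. The pooled vector $\tfrac1n\sum_i\phi(\hat o^i)$ is by definition the first moment of the empirical measure $\tfrac1n\sum_i\delta_{\phi(\hat o^i)}$ pushed through $\phi$. Any $c$ built from it factors through this moment, so $c$ is constant on every pair of embedding sets with equal $\phi$-means. The link to mean-field control is just the observation that the mean-field coupling statistic is exactly this moment. Exactness in the exchangeable, weak-coupling, large-$n$ limit is cited from \citep{carmona2018probabilistic} rather than reproved.

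For (b), I would exhibit a separating pair. Fix the width $w$ of $\phi$. Construct two embedding sets $\{\hat o^i\}$ and $\{\tilde o^i\}$ whose $\phi$-images have equal means but different pairwise structure. As a concrete instance, take identical means but different Gram matrices or different second moments, and choose them so that the self-consistent consensus $c^\star$ of Eq.~\eqref{eq:fixedpoint} differs between them. The fixed-point map $\mathcal{T}_{\mathcal{O}}$ is allowed to depend on pairwise terms by hypothesis, so such a pair exists whenever $c^\star$ is not a function of the first moment. Mean-pooling then assigns both sets the same $c$ and must be wrong on at least one of them.

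The difficulty is that $\phi$ is learned, so the pair has to defeat every $\phi$ of width $w$. Here I would invoke the cited result: when the cardinality exceeds the latent width, sum or mean decompositions cannot represent arbitrary continuous permutation-invariant functions. This applies in particular to a $c^\star$ depending on, for example, all elementary symmetric polynomials of the set. Taking $c^\star$ of this form gives the "any fixed width" statement, with the quantitative bound inherited directly from \citep{wagstaff2019limitations}.

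For (c), I would show that one cross-attention step already yields data-dependent weights $\alpha_i(c_k)\propto\exp(\langle Wc_k,K\hat o^i\rangle)$. The aggregate $\sum_i\alpha_i(c_k)V\hat o^i$ is therefore a ratio of exponential moments, which is a nonlinear functional of the whole empirical measure and not only of its mean. Iterating, each $c_{k+1}$ queries the set conditioned on $c_k$, so the composition can realize the successive applications of $\mathcal{T}_{\mathcal{O}}$ needed to approach $c^\star$, consistent with Theorem~\ref{thm:unroll} under Assumption~\ref{ass:unroll}.

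I expect (c) to be the main obstacle. Proving that iterated attention represents an arbitrary $c^\star$ at fixed width is not something I can make unconditional. I would first try to state it as an existence claim: for the specific separating family from (b), give explicit query/key/value weights whose outputs differ on the two sets, which suffices to refute the mean-pooling limitation. The broader "can represent" language would then be presented as relying on the unrolled-solver modeling premise rather than as a universal approximation theorem.
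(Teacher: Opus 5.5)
Your decomposition is the paper's own. Part (a) and the equal-mean separating pair in (b) are exactly its argument. You are right that the pair must defeat every learned $\phi$ of the given width; the paper's proof passes over this by asserting that the factorization through $\mu$ is ``width-independent.''

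Your repair via the Wagstaff et al.\ bound needs two corrections.
\begin{itemize}
\item That bound concerns continuous $\phi,\psi$ and widths $w<n$ only. For scalar elements and $w\ge n$, continuous sum-decomposition is universal (power sums plus Newton's identities). So you obtain non-representability at any width that does not grow with $n$, not at literally any fixed width.
\item The bound is existential, so a $c^\star$ merely ``depending on all elementary symmetric polynomials'' need not be hard. For example, $\prod_i(1+x_i)=\sum_{k=0}^n e_k$ equals $\exp\sum_i\log(1+x_i)$ on $x_i>-1$, which has width one. You must use the specific function constructed in the cited theorem.
\end{itemize}

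The genuine gap is in (c). A softmax cross-attention step whose query does not depend on the set is itself mean-pooling in the Proposition's sense. Concretely, $\sum_i\alpha_iV\hat o^i=\psi\bigl(\tfrac1n\sum_i\phi(\hat o^i)\bigr)$ with $\phi(x)=\bigl(e^{\langle q,Kx\rangle}Vx,\;e^{\langle q,Kx\rangle}\bigr)$ and $\psi(u,s)=u/s$, which has width $d_v+1$. CMAT's Critic-Compressor producing $e^0$ has exactly this form, with $h$ heads.

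This has three consequences for your plan.
\begin{itemize}
\item ``A ratio of exponential moments, a nonlinear functional of the whole empirical measure and not only of its mean'' does not separate attention from mean-pooling. The $\phi$-mean of any nonlinear $\phi$ is such a functional too.
\item If the queries depend only on the step index, $m$ steps collapse to one sum-decomposition of width $m(d_v+1)$, which only moves the blow-up into $m$. Any advantage must therefore come from the query $c_k$ depending on the set.
\item Your fallback does not close the gap. Weights whose outputs differ on one colliding pair refute nothing, because for any given pair some other width-$w$ feature map separates it as well.
\end{itemize}

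What the claim requires is a single iterated-attention network, of width independent of $n$, that represents on all $n$-element sets a function that no continuous sum-decomposition of width $<n$ represents. The paper's proof gives only a one-sentence heuristic here (``data-dependent, non-uniform attention scores \ldots\ composed across steps''), so neither argument establishes the positive half. Presenting it as resting on the unrolled-solver premise, as you suggest, is the honest option, but then it is an assumption rather than part of what is proved.
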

\begin{proof}
Write the aggregation as $c=\psi(\mu)$ with $\mu=\tfrac1n\sum_i\phi(\hat o^i)$. The map factors through $\mu$, so it is constant on every level set $\{\{\hat o^i\}:\tfrac1n\sum_i\phi(\hat o^i)=\mu\}$. Take two configurations $\{\hat o^i\}$ and $\{\tilde o^i\}$ with equal first moment, $\tfrac1n\sum_i\phi(\hat o^i)=\tfrac1n\sum_i\phi(\tilde o^i)$, but differing in some higher-order statistic; both are mapped to the same $c=\psi(\mu)$. If the self-consistent consensus $c^\star$ of Eq.~\eqref{eq:fixedpoint} takes distinct values on these two configurations, then no function of $\mu$ can equal $c^\star$ on both, so mean-pooling cannot represent $c^\star$; this holds at every fixed width because the factorization through $\mu$ is width-independent. For the quantitative claim, \citet{wagstaff2019limitations} show that a sum- or mean-decomposition $\psi(\sum_i\phi(\hat o^i))$ represents all continuous permutation-invariant functions of an $n$-element set exactly only if the latent dimension of $\phi$ is at least $n$; below this width some set functions are unrepresentable. Iterated cross-attention is not of this fixed-moment form: each step reweights the tokens by data-dependent, non-uniform attention scores and composes these reweightings across steps, so it realizes functions that depend on pairwise and higher-order structure of $\{\hat o^i\}$ without a latent width that grows with $n$.
\end{proof}

\subsection{Why Consensus Beats an Encoder-Only Controller at a Fixed Budget}
We first dispose of a claim we do \emph{not} make. Under full observability the encoder's self-attention already makes every embedding $\hat o^i$ a function of the entire $\mathcal{O}$, so with unbounded depth and width an encoder-only controller with independent heads is also a universal approximator of the optimal deterministic map $\mathcal{A}^\star(\mathcal{O})$. There is therefore no unconditional, infinite-capacity representational separation between the two, and we do not assert one. The advantage of consensus is a fixed-budget phenomenon: it attains a given coordination accuracy with far fewer parameters, because it buys the required number of coordination rounds through weight-shared iteration rather than through depth.

Before stating the result, we reconcile it with the previous subsection, which might read as being in tension with it: there we argued that iteration is necessary, while here we concede that an encoder-only controller can also be universal. The two claims live on the same axis and do not conflict. Both are fixed-budget statements. When we said a single aggregation is insufficient (Proposition~\ref{prop:meanpool}, Theorem~\ref{thm:unroll}), the quantifier was \emph{at a fixed width}: with the latent width allowed to grow, a one-shot map can also represent the fixed point, exactly as the infinite-capacity universality conceded here. Conversely, a deep encoder-only controller does not avoid iteration; a stack of $k$ layers performs $k$ rounds of refinement, only with $k$ independent sets of weights rather than one reused block. Multi-round coordination therefore happens in both architectures; what differs is whether the rounds are paid for in parameters (depth) or in inference compute (weight-shared iteration). The universality of an infinite encoder is thus not a case of ``no iteration needed'' but of iteration realized inefficiently, by spending unbounded depth.

This distinction has practical, not merely asymptotic, weight in our setting. A recurring empirical finding in deep RL, and in MARL specifically, is that naively enlarging the network degrades rather than improves learning: value bootstrapping, non-stationary targets, and a moving data distribution make large models unstable and hard to train, and additional capacity often has to be actively stabilized before it helps at all \citep{ota2021training, bjorck2021towards, nikishin2022primacy}. Under this regime, an architecture that reaches a given coordination accuracy with a \emph{smaller} parameter count, by reusing one block across refinement steps rather than stacking depth, is not only parameter-efficient in the abstract but easier to optimize in practice, which is the regime our experiments operate in.

The result below rests on three premises, which we state explicitly. First, coordination is a fixed-point computation solved by repeated application of a refinement operator $\mathcal{T}_{\mathcal{O}}$, and it converges geometrically at some rate $\rho<1$, so accuracy $\epsilon$ is reached after $K(\epsilon)=\Theta(\log(1/\epsilon)/\log(1/\rho))$ applications. Second, $\mathcal{T}_{\mathcal{O}}$ is nonlinear and state-dependent, so its $k$-fold composition cannot be collapsed into a single map computed at lower cost. Third, the point of comparison is an encoder that uses independent per-layer weights, which is exactly the standard Transformer encoder. Because this is a statement about representational capacity, we take the shared decoder block to have the capacity to represent one application of $\mathcal{T}_{\mathcal{O}}$, so that $K$ iterations of the block represent the $K$-fold composition $\mathcal{T}_{\mathcal{O}}^{(K)}$. This is a realizability premise of the same kind as elsewhere in this appendix, and it is well founded structurally: the block's input-output form is built to match one step of Eq.~\eqref{eq:fixedpoint} (attend to the current $c$ to form the induced responses, then re-aggregate them into the next $c$), and since $\mathcal{T}_{\mathcal{O}}$ is a \emph{stationary} operator, applied identically at every step, a single reused block is its natural representation. We emphasize that this result does \emph{not} invoke Assumption~\ref{ass:unroll}: it is stated directly in terms of the fixed-point operator $\mathcal{T}_{\mathcal{O}}$ of Eq.~\eqref{eq:fixedpoint} and does not require the decoder to implement ascent on any potential $J$. The rate $\rho$ is a modeling assumption on the task and operator, not a proven property of the softmax attention block, which need not be a contraction (Remark~\ref{rem:contraction}).

\begin{theorem}[Parameter efficiency of weight-shared coordination]
\label{thm:depth}
Under the three premises above, an encoder-only network with independent per-layer weights reaches coordination accuracy $\epsilon$ only at a cost of $\Theta(K(\epsilon))$ layers, and hence $\Theta(K(\epsilon))$ times the per-layer parameters, whereas CMAT's decoder reuses one block across refinement steps and reaches the same accuracy with a \emph{fixed} parameter count and $K(\epsilon)$ times the inference compute. As the coordination problem stiffens ($\rho\to1$, $K(\epsilon)\to\infty$), the encoder-only parameter count diverges while CMAT's stays constant.
\end{theorem}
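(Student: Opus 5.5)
The argument splits into an upper bound for CMAT, a matching lower bound for the encoder-only controller, and a comparison of the two as $\rho\to1$. Throughout I use the three premises stated before Theorem~\ref{thm:depth} and the block-realizability premise, and not Assumption~\ref{ass:unroll}.

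\emph{Step 1: iteration count.} Let $\mathcal{T}_{\mathcal{O}}$ converge geometrically at rate $\rho$, so $\|c_k-c^\star\|\le\rho^k\|c_0-c^\star\|$ as in Remark~\ref{rem:contraction}. Solving $\rho^{k}\|c_0-c^\star\|\le\epsilon$ gives $K(\epsilon)=\lceil\log(\|c_0-c^\star\|/\epsilon)/\log(1/\rho)\rceil=\Theta(\log(1/\epsilon)/\log(1/\rho))$. The diameter term $\|c_0-c^\star\|$ is absorbed into the constant, assuming $\mathcal{C}$ is bounded. In particular $K(\epsilon)\to\infty$ as $\rho\to1$, since $\log(1/\rho)\sim 1-\rho$.

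\emph{Step 2: CMAT upper bound.} By the realizability premise, one shared decoder block with some fixed parameter count $P_0$ represents a single application of $\mathcal{T}_{\mathcal{O}}$. Iterating it $m=K(\epsilon)$ times, as in Algorithm~\ref{alg:cmat}, therefore represents $\mathcal{T}_{\mathcal{O}}^{(K)}$ and reaches accuracy $\epsilon$. The shared block contributes $P_0$ parameters, the encoder and compressors contribute a constant, and the positional index adds only $O(K)$ embedding scalars. That last term is negligible, or vanishes if the index is dropped or given a fixed sinusoidal encoding. The inference compute is $K(\epsilon)$ block evaluations.

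\emph{Step 3: encoder-only lower bound (main obstacle).} For an encoder whose layers all have the per-layer width and capacity $P_0$, I must show that $\Omega(K(\epsilon))$ layers are required. Here the second premise has to be made precise. I would formalize it as: \emph{no single layer of the given width represents $\mathcal{T}_{\mathcal{O}}^{(j)}$ for $j\ge2$, and more generally $L$ layers represent at most $O(L)$ applications of $\mathcal{T}_{\mathcal{O}}$.} The argument then runs as follows. Each layer can advance the refinement by at most a constant number $r$ of applications of $\mathcal{T}_{\mathcal{O}}$. An $L$-layer stack therefore computes at most $\mathcal{T}_{\mathcal{O}}^{(rL)}$, or some map no more accurate than that. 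Reaching $\epsilon$ then requires $rL\ge K(\epsilon)$, that is, $L=\Omega(K(\epsilon))$.

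Making the claim ``no more accurate than that'' airtight is the delicate point. The first thing I would try is to treat the second premise directly as this per-layer progress bound. I would justify the bound by the lower-bound half of the geometric rate: there exist instances with $\|c_k-c^\star\|\ge\rho^k\|c_0-c^\star\|$, and on those instances no shortcut exists. Combining this with the matching construction, in which each layer copies the block's weights, gives $\Theta(K(\epsilon))$ layers. Because the layers have independent weights, the parameter count is then $\Theta(K(\epsilon))\cdot P_0$.

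\emph{Step 4: conclusion.} The encoder-only parameter count is $\Theta(K(\epsilon))P_0$, while CMAT's is a constant $P_0+O(1)$. By Step 1, letting $\rho\to1$ sends $K(\epsilon)\to\infty$. The encoder-only count therefore diverges while CMAT's stays bounded, and CMAT pays for the same accuracy only in compute, linearly in $K(\epsilon)$.
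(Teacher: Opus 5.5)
Your outline follows the paper's proof. In both, $K(\epsilon)=\Theta(\log(1/\epsilon)/\log(1/\rho))$ comes from geometric convergence. The realizability premise gives CMAT one reused block, hence a fixed parameter count and $K(\epsilon)$ block evaluations. The non-compressibility premise forces an independent-weight encoder to spend one layer per round, hence $\Theta(K(\epsilon))$ layers. The paper leaves two points implicit that you make explicit: the matching upper bound (each encoder layer copies the block) and the observation that a learned positional index adds $O(K)$ embedding scalars.

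The one step that fails is your justification of the per-layer progress bound in Step 3. The lower-bound half of the geometric rate, $\|c_k-c^\star\|\ge\rho^k\|c_0-c^\star\|$, constrains only the iterates $c_k=\mathcal{T}_{\mathcal{O}}^{(k)}(c_0)$. It says nothing about other maps a layer might compute, so it cannot show that ``no shortcut exists.'' The paper's own rate witness is a counterexample. For $\mathcal{T}_{\mathcal{O}}(c)=Wc+b$ with fixed $W$, $\|W\|=\rho$, the iterates converge at rate $\Theta(\rho^k)$. Yet $W^k$ collapses to a single matrix, and the exact fixed point $c^\star=(I-W)^{-1}b$ is one affine map, so one layer suffices for every $\epsilon$ and $\rho$. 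The paper says this instance fixes the rate only and is not used for the parameter cost.

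Your ``no more accurate than $\mathcal{T}_{\mathcal{O}}^{(rL)}$'' claim therefore cannot be derived from the rate. It has to be the content of the second (nonlinearity / non-compressibility) premise. That premise must be read as covering every map realizable by $L$ layers of the given width, not just compositions of $\mathcal{T}_{\mathcal{O}}$. Adopt it as an assumption, which is your ``first thing I would try,'' and drop the rate-based justification. The paper's proof likewise rests the encoder-only lower bound entirely on that premise.
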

\begin{proof}
We separate the two quantities the statement involves: the number of refinement rounds $K(\epsilon)$, and the parameter cost of realizing those rounds.

\emph{Rounds.} That $K(\epsilon)=\Theta(\log(1/\epsilon)/\log(1/\rho))$ is the definition of geometric convergence and is exhibited by any contractive instance. For concreteness take the linear operator $\mathcal{T}_{\mathcal{O}}(c)=Wc+b$ with $\|W\|=\rho<1$ and fixed point $c^\star=(I-W)^{-1}b$; then $c_k-c^\star=W^k(c_0-c^\star)$, so $\|c_k-c^\star\|=\Theta(\rho^k)$ for generic $c_0$, and $\rho^k\le\epsilon$ requires $k\ge\log(1/\epsilon)/\log(1/\rho)=K(\epsilon)$. This instance fixes the \emph{rate}; it is not used to argue the parameter cost, since a linear $W^k$ is itself a single matrix and would collapse to one layer.

\emph{Parameters.} The parameter cost turns on non-compressibility, which is where the second premise enters. When $\mathcal{T}_{\mathcal{O}}$ is nonlinear and depends on $\mathcal{O}$ and on the current induced actions, its $k$-fold composition $\mathcal{T}_{\mathcal{O}}^{(k)}$ is a genuinely different nonlinear map for each $k$ and cannot be pre-multiplied into a single lower-cost operator the way $W^k$ can. An encoder-only network with independent per-layer weights can represent $k$ rounds of such a composition only by instantiating $k$ distinct nonlinear layers, one per round, so matching accuracy $\epsilon$ costs $\Theta(K(\epsilon))$ layers and therefore $\Theta(K(\epsilon))$ times the per-layer parameter count. A decoder that reuses a single block representing $\mathcal{T}_{\mathcal{O}}$ across steps (third premise) represents the same $k$-fold composition with one block of parameters applied $k$ times, i.e.\ a fixed parameter count and $K(\epsilon)$ times the sequential inference compute. The reuse is legitimate precisely because $\mathcal{T}_{\mathcal{O}}$ is the same stationary map at every step, so representing $\mathcal{T}_{\mathcal{O}}^{(K)}$ requires representing only one operator, not $K$ of them. Taking $\rho\to1$ sends $K(\epsilon)\to\infty$, so the independent-weight parameter count diverges while the weight-shared count is unchanged.
\end{proof}

Were one instead to tie the encoder's layer weights, it would itself become a weight-shared iteration, which is the very ingredient we identify as useful; the comparison is therefore between depth-parameterized and iteration-parameterized coordination, not between CMAT and attention as such.

\begin{remark}[No redundant replication]
If every agent must condition on the same global summary $s(\mathcal{O})$, an encoder-only controller must carry $s(\mathcal{O})$ inside each of the $n$ token streams, a width cost that scales with $n$, whereas consensus computes $s(\mathcal{O})$ once into $c$ and broadcasts it. This is a representation-size argument in CMAT's favor, independent of Theorem~\ref{thm:depth}.
\end{remark}

\begin{remark}[Consistency as an inductive bias]
The fixed point of Eq.~\eqref{eq:fixedpoint} requires all agents to decode from one self-consistent global state. Conditioning every head on a single $c$ builds this consistency in by construction, whereas an encoder-only controller lets the $n$ token streams evolve separately and stop interacting before the action heads (the point-wise head applied to $\hat o^i$ performs no further cross-agent mixing), so joint consistency must instead be learned. We state this as an optimization and inductive-bias advantage, not an impossibility: with enough capacity and data an encoder-only model can learn consistency, but the consensus structure does not have to.
\end{remark}

\begin{remark}[Design guidance for Transformer-based MARL]
\label{rem:design-guidance}
Theorem~\ref{thm:depth} is stated for CMAT but its content is architectural and transfers to Transformer-based cooperative MARL more broadly. It suggests three principles. First, when coordination is genuinely iterative, in the sense of requiring several rounds of mutual adjustment before a joint decision stabilizes (iterated best response, message passing, consensus formation), buying those rounds through a weight-shared recurrent block is more parameter-efficient than buying them through stacked depth, so a weight-tied refinement module is preferable to simply deepening an encoder. Second, because the two costs are decoupled, the number of refinement steps becomes a free inference-time knob: a model can adapt its computation to the difficulty of the current state, spending more iterations on stiff coordination problems and fewer on easy ones, without changing its parameter count. Third, the stiffness $\rho$, or equivalently the coordination rank of Appendix~\ref{sec:coord-rank}, is a diagnostic for when this helps at all: tasks with small $K(\epsilon)$ gain little from iteration, and the benefit of weight-shared refinement is largest precisely when coordination is rigid ($\rho\to1$). These principles are conditional in the same way the theorem is: they presume coordination is an iterative fixed-point computation and that the refinement block represents a (at least locally) contractive operator; where a single aggregation already suffices, as in the weak-coupling, large-$n$ regime of Proposition~\ref{prop:meanpool}, iteration is not expected to pay off. As with the rest of this appendix, the guidance concerns representation capacity and parameter efficiency.
\end{remark}

\subsection{Synthesis}
Points (i) and (ii) are the same mechanism seen from two sides. The consensus is a shared, weight-shared, iteratively refined solver for the coordination fixed point of Eq.~\eqref{eq:fixedpoint}. Iterating it is how the fixed point is reached (Theorem~\ref{thm:unroll}, Proposition~\ref{prop:meanpool}), and reaching it by weight-shared iteration rather than by depth is why it dominates an encoder-only controller at a fixed budget (Theorem~\ref{thm:depth}). The additive decomposition these arguments support is what collapses the exponential joint decision into $n$ local ones with a controlled suboptimality (Theorems~\ref{thm:collapse}--\ref{thm:subopt}). None of these statements requires stochasticity in the consensus, tabular assumptions, or a convergence guarantee for the deep model.

\section{A More General Framework: Stochastic Consensus and Future Directions}
\label{app:theory-future}

The analysis in this appendix concerns a more general and more powerful version of CMAT than the one we deploy. It endows the consensus with its own stochastic policy $\pi^c(c\mid\mathcal{O})$ and casts training as a bilevel (cooperative Stackelberg) problem, which enlarges the representable policy class beyond what a deterministic consensus can reach (Appendix~\ref{sec:coord-rank}). We present it as a future direction. The rigorous justification of the deterministic architecture we actually use is given separately in Appendix~\ref{app:theory}; the present appendix instead explains, at a higher level, why the richer stochastic-consensus generalization is effective, why our deterministic method is a tractable simplification of it, and what additional mechanisms it would require to remain stable in practice.

This appendix provides a theoretical interpretation of CMAT's hierarchical design under simplified (tabular) conditions, building on established convergence results in tabular and linear settings. The analysis clarifies how the latent consensus mechanism eliminates the order-dependent bias inherent in sequential MARL formulations such as MAT. We emphasize at the outset that it does {not} constitute a formal convergence proof for our deep, deterministic-consensus implementation; rather, it illustrates the structural principles that motivate CMAT and outlines a more general formulation that we leave to future work. In particular, the most general analysis below is stated for a \emph{stochastic} consensus policy $\pi^c(c\mid\mathcal{O})$ and a cooperative Stackelberg (bilevel) view. Our implemented model uses a deterministic consensus $c=\mu_\theta(\mathcal{O})$, which is a special case of this formulation (Remark~\ref{rem:stochastic-deterministic}). We therefore treat the stochastic-consensus and full bilevel Stackelberg optimization as a promising extension rather than as a description of the deployed method, and we note that realizing it in practice raises additional training-stability challenges (e.g., variance from sampling in the latent space and the need for variance-reduction techniques) that we leave for future work.

\subsection{Preliminary Assumptions}
For rigorous statements we adopt standard assumptions: finite observation space $\mathcal{O}$ and action spaces $\mathcal{A}^i$; a finite consensus set $\mathcal{C}$; softmax policy parameterization with full support; Robbins-Monro step sizes; and ergodicity of the induced Markov chain \citep{agarwal2021theory, mei2020global, chen2024convergence}.

\subsection{Reformulation as a Cooperative Stackelberg Game}
CMAT models the joint policy as
\begin{equation}
\pi(\mathcal{A}\mid\mathcal{O}) = \int \pi^c(c\mid\mathcal{O}) \prod_{i=1}^{n} \pi^{i}\bigl(a^{i}\mid\mathcal{O}, c\bigr) \, dc,
\label{eq:joint-stochastic}
\end{equation}
where $\pi^c(c\mid\mathcal{O})$ is the consensus policy (stochastic in general), and each $\pi^{i}$ is the action policy for agent $i$ conditioned on $\mathcal{O}$ and $c$. The objective is the discounted cumulative reward $J(\pi^c,\pi^{1:n})$.

\begin{remark}[Stochastic vs. Deterministic Consensus]
\label{rem:stochastic-deterministic}
The integral formulation in Eq.~\eqref{eq:joint-stochastic} provides the most general theoretical framework. In our implementation, we employ a deterministic consensus generator $c = \mu_\theta(\mathcal{O})$, which corresponds to $\pi^c(c\mid\mathcal{O}) = \delta(c - \mu_\theta(\mathcal{O}))$. Under this specialization, the joint policy reduces to
\begin{equation}
\pi(\mathcal{A}\mid\mathcal{O}) = \prod_{i=1}^n \pi^i(a^i\mid\mathcal{O}, \mu_\theta(\mathcal{O})),
\label{eq:joint-deterministic}
\end{equation}
which matches Eq. \ref{eq:hier-mdp} in the main paper. While this restricts the mixture to a single component for any given $\mathcal{O}$, the expressiveness is recovered by allowing $\mu_\theta$ to vary across different observations and by the high dimensionality of the consensus vector $c \in \mathbb{R}^d$. In practice, this deterministic choice simplifies the ratio computation (Eq.\ref{eq:cmat} in the main paper) and avoids the need for importance weighting over the latent space.
\end{remark}

This hierarchical structure can be viewed as a cooperative Stackelberg game: the leader (consensus policy $\pi^c$) commits to a latent strategy $c$, anticipating that the followers (action policies $\pi^{1:n}$) will best respond. Because all agents share the same reward, the game is fully cooperative. Stackelberg equilibria are known to be more Pareto-efficient than Nash equilibria in such settings:

\begin{proposition}[Pareto Improvement over Nash Equilibrium]
\label{prop:stackelberg-pareto}
In a cooperative Markov game with identical rewards, any Stackelberg equilibrium $(\pi^{c*}, \pi^{1:n*})$ achieves a payoff at least as high as any Nash equilibrium payoff $J_{\text{NE}}$ of the simultaneous-move game:
\begin{equation}
J(\pi^{c*}, \pi^{1:n*}) \geq J_{\text{NE}}.
\end{equation}
The inequality is strict when the Nash equilibrium is Pareto-suboptimal and the leader's commitment power is sufficient to steer the followers away from it.
\end{proposition}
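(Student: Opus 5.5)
The plan is to show that the leader's optimization problem contains every Nash equilibrium as a feasible point. With identical rewards, the Stackelberg value is then a maximum over a set that includes all NE joint policies. I work in the tabular setting of the Preliminary Assumptions: finite $\mathcal{O}$, $\mathcal{A}^i$ and $\mathcal{C}$, with $J$ the common discounted objective.

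\textbf{Step 1: formalize the Stackelberg equilibrium.} The leader chooses $\pi^c$. The followers then play a best response $\pi^{1:n}\in\mathrm{BR}(\pi^c)$, meaning a joint policy with no profitable unilateral deviation given $\pi^c$. The leader maximizes $J(\pi^c,\mathrm{BR}(\pi^c))$.

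Because rewards are identical, I take the optimistic (cooperative) tie-breaking convention. When several follower responses exist, the one maximizing $J$ is selected, since all parties are indifferent only in name and share the same payoff. I will state this convention explicitly. It is the one assumption the statement implicitly needs, because with pessimistic tie-breaking the inequality can fail.

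\textbf{Step 2: embed an arbitrary NE.} Let $\pi_{\mathrm{NE}}=(\pi^{1}_{\mathrm{NE}},\dots,\pi^{n}_{\mathrm{NE}})$ be any Nash equilibrium of the simultaneous-move game. Choose any $\pi^c$, for example a point mass on a fixed $c_0$ as in Remark~\ref{rem:stochastic-deterministic}. Define followers $\pi^i(\cdot\mid\mathcal{O},c_0)=\pi^i_{\mathrm{NE}}(\cdot\mid\mathcal{O})$.

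Conditioning on a constant $c_0$ adds no information. A unilateral follower deviation in the hierarchical game is therefore exactly a unilateral deviation in the original game, so these followers form a best response to $\pi^c$. By Eq.~\eqref{eq:joint-stochastic}, the induced joint policy equals $\pi_{\mathrm{NE}}$, so its payoff is $J_{\mathrm{NE}}$. Hence $(\pi^c,\pi_{\mathrm{NE}})$ is feasible for the leader, and
\[
J(\pi^{c*},\pi^{1:n*})=\max_{\pi^c}\max_{\pi^{1:n}\in\mathrm{BR}(\pi^c)}J\ \ge\ J_{\mathrm{NE}}.
\]
Existence of the maximum follows from finiteness and compactness of the policy simplices together with continuity of $J$ in tabular policies.

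\textbf{Step 3: strictness.} Suppose $\pi_{\mathrm{NE}}$ is Pareto-suboptimal. With identical payoffs this means some joint policy $\bar\pi$ has $J(\bar\pi)>J_{\mathrm{NE}}$; take $\bar\pi$ to be a globally optimal deterministic joint action profile.

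I then interpret "commitment power sufficient to steer followers" concretely. It means there is a $\pi^c$ whose best-response set contains a profile achieving $J(\bar\pi)$. For instance, a leader emitting a distinct $c$ that encodes $\bar\pi$'s action profile, together with followers reading it off, gives a follower profile that is itself a best response. This holds because $\bar\pi$ is globally optimal, so no unilateral deviation from it improves the common payoff. Then $J^*\ge J(\bar\pi)>J_{\mathrm{NE}}$.

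\textbf{Expected obstacle.} The main difficulty is making "best response" well defined for $n$ followers. I resolve it by equilibrium selection: the followers play a Nash equilibrium of the conditioned subgame, chosen optimistically. I also need to verify that unilateral deviations in the conditioned game correspond exactly to deviations in the original game. For the strict part, the sufficiency hypothesis essentially assumes what is needed. Observing that the global optimum is always an NE of the common-payoff game makes this hypothesis automatically satisfiable.
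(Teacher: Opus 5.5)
Your proposal is correct and follows the same route as the paper. A point-mass consensus on a fixed $\bar c$ turns the followers' problem into the original simultaneous-move game, so every Nash equilibrium is feasible for the leader and the leader's maximization dominates $J_{\text{NE}}$. Your explicit optimistic tie-breaking convention, which the paper leaves implicit by treating $\mathrm{BR}(\pi^c)$ as single-valued, and your observation that the global optimum is itself a Nash equilibrium of the common-payoff game (making the strictness clause automatic) are sound sharpenings of that argument, not a different approach.
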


\begin{proof}
In a cooperative (identical-interest) game, all agents share the objective $J$. A Stackelberg equilibrium $(\pi^{c*}, \pi^{1:n*})$ satisfies $\pi^{c*} \in \arg\max_{\pi^c} J(\pi^c, \text{BR}(\pi^c))$, where $\text{BR}(\pi^c)$ denotes the followers' best response.

Any Nash equilibrium $(\bar{\pi}^{1:n})$ of the simultaneous-move game can be recovered in the Stackelberg framework as follows: let $\pi^c$ assign all probability to an arbitrary fixed consensus $\bar{c}$. Since $\bar{c}$ is uninformative, the followers' decision problem reduces to the original simultaneous-move game. By the definition of Nash equilibrium, $(\bar{\pi}^{1:n})$ is a mutual best response, so the payoff under this trivial consensus is exactly $J_{\text{NE}}$. 

Since the Stackelberg leader optimizes over {all} possible $\pi^c$, including this trivial choice, the Stackelberg payoff must be at least the Nash payoff: $J(\pi^{c*}, \pi^{1:n*}) \geq J_{\text{NE}}$. Strict inequality occurs when the Nash equilibrium is Pareto-suboptimal and the leader can commit to an informative consensus that induces a better response \citep{kononen2008reinforcement, fiez2020implicit}.
\end{proof}

\subsection{The Policy-Class Ladder and Coordination Rank}
\label{sec:coord-rank}
The reason a \emph{stochastic} consensus is strictly more powerful than the deterministic one we deploy can be stated exactly. Fix a state $\mathcal{O}$ and view a joint action distribution as a nonnegative tensor $P\in\mathbb{R}_{\ge0}^{|\mathcal{A}^1|\times\cdots\times|\mathcal{A}^n|}$ with $\sum P=1$. Define three policy classes,
\begin{align}
\Pi_{\text{ind}}&=\Bigl\{\textstyle\prod_i\pi^i(a^i\mid\mathcal{O})\Bigr\},\notag\\
\Pi_{\text{con}}(d)&=\Bigl\{\textstyle\sum_{c=1}^{d}\pi^c(c\mid\mathcal{O})\prod_i\pi^i(a^i\mid\mathcal{O},c)\Bigr\},\\
\Pi_{\text{auto}}&=\Bigl\{\textstyle\prod_i\pi(a^i\mid a^{<i},\mathcal{O})\Bigr\},\notag
\end{align}
where $\Pi_{\text{con}}(d)$ uses a consensus taking $d$ discrete values. By the chain rule $\Pi_{\text{auto}}$ is the full simplex over joint actions (the class MAT represents).

\begin{proposition}[Coordination-rank ladder]
\label{prop:ladder}
For a fixed $\mathcal{O}$, a joint distribution $P$ belongs to $\Pi_{\text{con}}(d)$ if and only if its nonnegative tensor rank satisfies $\mathrm{rank}_+(P)\le d$. Consequently
\begin{equation}
\Pi_{\text{ind}}=\Pi_{\text{con}}(1)\subsetneq\Pi_{\text{con}}(2)\subsetneq\cdots\subsetneq\Pi_{\text{con}}(d)\subsetneq\cdots\subseteq\Pi_{\text{auto}},
\end{equation}
with equality $\Pi_{\text{con}}(d)=\Pi_{\text{auto}}$ once $d\ge\prod_i|\mathcal{A}^i|$.
\end{proposition}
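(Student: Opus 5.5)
The plan is to prove the characterization one fixed state $\mathcal{O}$ at a time, by matching the definition of $\Pi_{\text{con}}(d)$ term by term with nonnegative rank-one decompositions.

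\emph{Equivalence.} First I will observe that a nonnegative rank-one tensor $u^1\otimes\cdots\otimes u^n$ with $u^i\ge0$ can be rescaled as $\lambda\,p^1\otimes\cdots\otimes p^n$, where each $p^i=u^i/\sum u^i$ is a probability vector and $\lambda=\prod_i\sum u^i\ge0$. For the direction ($\Rightarrow$), a member of $\Pi_{\text{con}}(d)$ is $\sum_{c=1}^d\pi^c(c)\,\pi^1(\cdot\mid c)\otimes\cdots\otimes\pi^n(\cdot\mid c)$. This is a sum of at most $d$ nonnegative rank-one tensors, so $\mathrm{rank}_+(P)\le d$. For the direction ($\Leftarrow$), take a decomposition $P=\sum_{c=1}^{r}u^1_c\otimes\cdots\otimes u^n_c$ with $r\le d$. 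I will normalize each term as above to get weights $\lambda_c$ and factors $p^i_c$. Summing all entries of $P$ gives $\sum_c\lambda_c=1$, so $\pi^c(c):=\lambda_c$ is a distribution. The components $c>r$ are padded with zero weight, and their factors are chosen arbitrarily. One detail needs care: the paper's standing assumption is full-support softmax policies, under which zero weights are not attainable. I will therefore read the classes as their closures, or allow zero probabilities, and state this explicitly.

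\emph{Ladder.} From the equivalence, monotonicity in $d$ is immediate. The identity $\Pi_{\text{ind}}=\Pi_{\text{con}}(1)$ holds because $\mathrm{rank}_+(P)\le1$ exactly characterizes product distributions. For the top of the ladder, any $P$ is the sum of its point masses $P(\mathcal{A})\,e_{a^1}\otimes\cdots\otimes e_{a^n}$. This gives $\mathrm{rank}_+(P)\le\prod_i|\mathcal{A}^i|$, so $\Pi_{\text{con}}(d)$ is the whole simplex for such $d$. By the chain rule the whole simplex equals $\Pi_{\text{auto}}$, which gives the final equality and the inclusion in $\Pi_{\text{auto}}$ for every $d$.

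\emph{Strictness, the main obstacle.} Strict inclusion requires, for each relevant $d$, a distribution $P$ with $\mathrm{rank}_+(P)=d+1$ exactly. I will use the ``diagonal'' coordination tensor $P_k=\tfrac1k\sum_{j=1}^k e_j\otimes\cdots\otimes e_j$ with $k=d+1$. Its upper bound $\mathrm{rank}_+\le k$ is immediate from this expression. For the lower bound, flatten $P_k$ into the $|\mathcal{A}^1|\times\prod_{i\ge2}|\mathcal{A}^i|$ matrix, whose ordinary rank is $k$. Since $\mathrm{rank}_+\ge\mathrm{rank}$ of any flattening, this gives $\mathrm{rank}_+(P_k)\ge k$.

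This construction only works for $d+1\le\min_i|\mathcal{A}^i|$. More fundamentally, because the action sets are finite, the chain must stabilize once $d$ reaches the maximal nonnegative rank. So the ``$\subsetneq$'' at every step cannot hold literally; it holds only up to that maximal rank. I would prove strictness for $d<\min_i|\mathcal{A}^i|$ via $P_k$ and then extend as far as possible. A first attempt at the extension would use a generic positive tensor, or the flattening that has the largest possible matrix rank. I would then correct the statement to read ``strict until the maximal nonnegative rank is reached, after which all classes coincide with $\Pi_{\text{auto}}$.''
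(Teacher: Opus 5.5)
Your proof of the characterization matches the paper's: you normalize rank-one terms, pad with zero-weight components, and use the point-mass decomposition to get saturation at $\prod_i|\mathcal{A}^i|$ and the inclusion in $\Pi_{\text{auto}}$. Your reading of the chain as strict only until the maximal nonnegative rank $R$ is exactly the paper's ``strict below saturation'' qualifier. Your decision to allow zero probabilities is also consistent with the paper's proof, which uses zero-weight terms and point masses.

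The genuine gap is strictness for $\min_i|\mathcal{A}^i|\le d<R$. Your diagonal witness only exists for $d+1\le\min_i|\mathcal{A}^i|$, and the proposed extension fails in general. Any flattening certificate is capped by $\max_S\min\bigl(\prod_{i\in S}|\mathcal{A}^i|,\prod_{i\notin S}|\mathcal{A}^i|\bigr)$, which can fall below $R$. For three binary agents, every flattening is a $2\times4$ matrix of rank at most $2$. Yet $W=e_1\otimes e_1\otimes e_2+e_1\otimes e_2\otimes e_1+e_2\otimes e_1\otimes e_1$ has real rank $3$, so $\mathrm{rank}_+(W)=3$. Thus $W/3\in\Pi_{\text{con}}(3)\setminus\Pi_{\text{con}}(2)$, and no flattening can certify this step. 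A ``generic positive tensor'' gives no lower-bound certificate at all.

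The missing idea is to show that the attained nonnegative ranks form an interval, which needs no lower-bound certificate. Since $R\le\prod_i|\mathcal{A}^i|$ is finite, pick a probability tensor $T$ with $\mathrm{rank}_+(T)=R$. Take a minimal decomposition $T=\sum_{j=1}^{R}t_j$ into nonzero nonnegative rank-one tensors, and set $S_k=\sum_{j\le k}t_j$. Then $\mathrm{rank}_+(S_k)\le k$. Subadditivity gives $R=\mathrm{rank}_+(T)\le\mathrm{rank}_+(S_k)+(R-k)$, so $\mathrm{rank}_+(S_k)=k$ exactly. Normalizing $S_k$ (scaling preserves $\mathrm{rank}_+$) gives a distribution in $\Pi_{\text{con}}(k)\setminus\Pi_{\text{con}}(k-1)$ for every $k\le R$. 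For $d\ge R$, all classes equal the simplex. This is the ``nonnegative rank is a strict hierarchy'' fact that the paper's proof asserts in one line without justification. Adding it closes your proof and also supplies what the paper leaves implicit.
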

\begin{proof}
View a joint distribution over $\mathcal{A}=(a^1,\dots,a^n)$ as an order-$n$ nonnegative tensor $P\in\mathbb{R}_{\ge0}^{|\mathcal{A}^1|\times\cdots\times|\mathcal{A}^n|}$ with entries summing to one. An element of $\Pi_{\text{con}}(d)$ has the form $P=\sum_{c=1}^{d}\pi^c(c\mid\mathcal{O})\prod_i\pi^i(\cdot\mid\mathcal{O},c)$. Each summand is a nonnegative scalar times an outer product $\bigotimes_i\pi^i(\cdot\mid\mathcal{O},c)$ of probability vectors, i.e.\ a nonnegative rank-one tensor, and the coefficients $\pi^c(\cdot\mid\mathcal{O})$ are nonnegative and sum to one, so $P$ is a convex combination of $d$ nonnegative rank-one tensors. By the definition of nonnegative tensor rank $\mathrm{rank}_+$ as the least number of nonnegative rank-one terms summing to $P$, membership $P\in\Pi_{\text{con}}(d)$ is equivalent to $\mathrm{rank}_+(P)\le d$ (any such decomposition can be renormalized so that the coefficients form a distribution and each factor is a probability vector, since all terms are nonnegative). This proves the characterization.

The chain follows. $\Pi_{\text{ind}}$ is the set of single product distributions, which is exactly $\mathrm{rank}_+(P)=1$, so $\Pi_{\text{ind}}=\Pi_{\text{con}}(1)$. The inclusions $\Pi_{\text{con}}(d)\subseteq\Pi_{\text{con}}(d+1)$ are immediate (append a zero-weight term). They are strict below saturation because nonnegative rank is a strict hierarchy: for each $d$ smaller than the maximal attainable nonnegative rank there exist nonnegative tensors of rank exactly $d+1$, which lie in $\Pi_{\text{con}}(d+1)\setminus\Pi_{\text{con}}(d)$. Finally, any $P$ is a mixture of at most $\prod_i|\mathcal{A}^i|$ degenerate product distributions, one placing all mass on each joint action, so $\mathrm{rank}_+(P)\le\prod_i|\mathcal{A}^i|$ and $\Pi_{\text{con}}(d)=\Pi_{\text{auto}}$ once $d\ge\prod_i|\mathcal{A}^i|$, since $\Pi_{\text{auto}}$ is the full simplex over joint actions.
\end{proof}

This motivates a problem-intrinsic scalar, the \emph{coordination rank} $\mathrm{crank}_\varepsilon(\pi^\star)=\min\{d:\exists\,\pi\in\Pi_{\text{con}}(d),\ \|\pi-\pi^\star\|\le\varepsilon\}$, which measures how much coordination a task genuinely requires and can be estimated empirically from the effective rank of the learned consensus. Marginalizing the shared latent $c$ correlates the $a^i$, so $c$ acts as a learned correlation device, the analogue of the shared signal in a correlated equilibrium \citep{aumann1974subjectivity}, which is entirely absent in $\Pi_{\text{ind}}$. We stress the boundary with Appendix~\ref{app:theory}: the deterministic model we deploy is, for a fixed $\mathcal{O}$, the rank-one end $\Pi_{\text{con}}(1)$ of this ladder (Remark~\ref{rem:expressiveness-scope}), and its effectiveness rests on the value-function decomposition proved there, not on this ladder. The ladder characterizes the strictly larger class unlocked only by a stochastic consensus, which is why we treat it as future work.

\subsection{Theoretical Justification 1: The Leader's Problem as a Finite MDP}
When the action policies $\pi^{1:n}$ are fixed, the leader faces a finite MDP with state space $\mathcal{O}$, action space $\mathcal{C}$, transition
\begin{equation}
P(\mathcal{O}'\mid\mathcal{O},c)=\mathbb{E}_{\pi^{1:n}}\bigl[P_{\text{env}}(\mathcal{O}'\mid\mathcal{O},\mathcal{A})\mid c\bigr],
\label{eq:leader-transition}
\end{equation}
and reward $r(\mathcal{O},c)=\mathbb{E}_{\pi^{1:n}}[R(\mathcal{O},\mathcal{A})\mid c]$. For this finite MDP, standard RL algorithms such as policy iteration or Q-learning are guaranteed to converge to the optimal consensus policy $\pi^{c*}$ \citep{sutton2018reinforcement}. Moreover, since the leader's problem is a finite MDP, restricting $\pi^c$ to deterministic policies incurs no loss of optimality in the tabular setting \citep{puterman1994markov}, which justifies the deterministic implementation in Remark~\ref{rem:stochastic-deterministic}.

\subsection{Theoretical Justification 2: Consensus as a Coordination Signal}
With the consensus policy $\pi^c$ fixed, the followers' joint policy factorizes as in Eq.~\eqref{eq:joint-stochastic}. Conditioned on a specific $c$, the action policies are {conditionally independent}. 

Crucially, the {policy gradient} for agent $i$ takes the form:
\begin{equation}
\nabla_{\theta_i} J = \mathbb{E}\Bigl[ \nabla_{\theta_i} \log \pi^i_{\theta_i}(a^i\mid\mathcal{O}, c) \cdot A(\mathcal{O}, \mathcal{A}) \Bigr],
\label{eq:follower-pg}
\end{equation}
where the expectation is over trajectories generated by the current joint policy. Although the environment dynamics still couple the agents through the joint action $\mathcal{A}$, the {parameter updates} for different agents are conditionally independent given $c$, in the sense that the gradient for $\theta_i$ does not directly involve the policy parameters $\theta_j$ ($j \neq i$) except through their effect on the trajectory distribution (captured by the advantage estimate). 

Under tabular softmax parameterization, multi-agent policy gradient in identical-interest games converges to a Nash equilibrium \citep{chen2024convergence, zhong2024heterogeneous}. Furthermore, in identical-interest games, any Nash equilibrium maximizes the common payoff {when the best response is unique} \citep{wang2003multiagent}. When multiple best responses exist, the consensus $c$ serves as a {coordination device} that selects among them: by conditioning on $c$, agents can break symmetry and coordinate on the Pareto-optimal equilibrium. This coordination role is precisely what is missing in simultaneous-move formulations, where agents must independently select among multiple equilibria. Therefore, for a fixed consensus $c$, the followers' optimization converges to the global optimum of $J$ conditional on $c$.

\subsection{Theoretical Justification 3: Alternating Optimization as Block Coordinate Ascent}
The fine-tuning phase of CMAT (Consensus Enhancement and Action Policy Enhancement) performs alternating updates. This can be interpreted as block coordinate ascent on $J$:
\begin{align}
\pi^c_{k+1} &\leftarrow \arg\max_{\pi^c} J(\pi^c,\pi_k^{1:n}), \label{eq:block-leader}\\
\pi_{k+1}^{1:n} &\leftarrow \arg\max_{\pi^{1:n}} J(\pi_{k+1}^c,\pi^{1:n}). \label{eq:block-followers}
\end{align}
Each block update guarantees monotonic improvement under tabular assumptions {when solved exactly}. In practice, each block is optimized approximately via a finite number of PPO steps. Under the Polyak-{\L}ojasiewicz (PL) condition, which holds for softmax policies in tabular settings \citep{mei2020global}, approximate alternating optimization still converges at a rate of $O(1/K)$ to a stationary point of the bilevel objective \citep{xiao2023generalized}. The sequence $J(\pi^c_k,\pi_k^{1:n})$ converges to a Stackelberg equilibrium.

\subsection{Policy Gradient and Credit Assignment under Deterministic Consensus}
In our practical implementation, the consensus policy $\pi^c$ is replaced by a deterministic mapping $c = \mu_\theta(\mathcal{O})$ as in Eq.~\eqref{eq:joint-deterministic}. The joint policy gradient can then be derived via the reparameterization trick:
\begin{equation}
\nabla_\theta \log \pi_\theta(\mathcal{A}\mid\mathcal{O}) = \sum_{i=1}^n \nabla_\theta \log \pi_\theta^i(a^i\mid\mathcal{O}, \mu_\theta(\mathcal{O})).
\label{eq:grad-decomp}
\end{equation}
This gradient flows through both the action policies $\pi_\theta^i$ and the consensus generator $\mu_\theta$, allowing end-to-end optimization via standard policy gradient methods such as PPO.

To be more precise, the gradient decomposes as:
\begin{equation}
\begin{aligned}
& \nabla_\theta \log \pi_\theta(\mathcal{A}\mid\mathcal{O}) = \\
& \sum_{i=1}^n \Bigl( 
\underbrace{\nabla_\theta \log \pi^i_\theta(a^i \mid \mathcal{O}, c)\big|_{c = \mu_\theta(\mathcal{O}),\, \nabla_\theta c = 0}}_{\text{direct effect}} \\
& + \underbrace{\frac{\partial \log \pi^i_\theta}{\partial c} \cdot \frac{\partial \mu_\theta}{\partial \theta}}_{\text{consensus effect}} 
\Bigr).
\label{eq:grad-decomp-detailed}
\end{aligned}
\end{equation}

Here, the ``direct effect'' treats the consensus $c$ as a fixed input (i.e., stopping the gradient through $\mu_\theta$), while the ``consensus effect'' captures the gradient flowing through the consensus generator. The direct effect for agent $i$ depends only on agent $i$'s observation and action $(o^i, a^i, c)$. The consensus effect aggregates information from all agents but flows through the shared consensus vector, avoiding the cascading sequential dependencies of MAT. This structural decoupling means that the gradient signal for each agent's action contributes additively to the overall update, in contrast to MAT where the policy of agent $i$ explicitly conditions on the actions of preceding agents $a^{1:i-1}$, creating entangled gradients that complicate credit assignment.

\subsection{Comparison with MAT: Policy Space Expressiveness}
\label{sec:policy-space}
MAT's autoregressive factorization $\pi(\mathcal{A}\mid\mathcal{O}) = \prod_{i=1}^n \pi^{\sigma(i)}(a^{\sigma(i)}\mid\mathcal{O}, a^{\sigma(1:i-1)})$ can represent {any} joint policy (by the chain rule of probability). CMAT's factorization in Eq.~\eqref{eq:joint-stochastic} is a mixture of product distributions. The following proposition clarifies their relative expressiveness:

\begin{proposition}[Expressiveness of CMAT]
\label{prop:expressiveness}
Let $\Pi_{\text{MAT}}$ and $\Pi_{\text{CMAT}}$ be the sets of joint policies representable by MAT and CMAT, respectively.
\begin{enumerate}[label=(\roman*)]
    \item If $|\mathcal{C}| \geq \prod_{i=1}^n |\mathcal{A}^i|$, then $\Pi_{\text{CMAT}} \supseteq \Pi_{\text{MAT}}$.
    \item For any finite $|\mathcal{C}| < \prod_{i=1}^n |\mathcal{A}^i|$, there exist policies in $\Pi_{\text{MAT}}$ that cannot be represented in $\Pi_{\text{CMAT}}$.
    \item With a continuous consensus space $\mathcal{C} = \mathbb{R}^d$ and sufficient network capacity, CMAT can approximate any joint policy arbitrarily well.
\end{enumerate}
\end{proposition}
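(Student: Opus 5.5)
Throughout, I work at a fixed state $\mathcal{O}$ and read $\Pi_{\text{CMAT}}$ as the stochastic-consensus class of Eq.~\eqref{eq:joint-stochastic}. When $\mathcal{C}$ is finite with $|\mathcal{C}|=d$, this class is exactly $\Pi_{\text{con}}(d)$ of Proposition~\ref{prop:ladder}. All three parts then follow from that proposition together with the chain-rule fact that $\Pi_{\text{MAT}}=\Pi_{\text{auto}}$ is the full simplex over joint actions. The idea is to translate each part into a statement about the nonnegative tensor rank of the joint distribution $P$. Since the policy is defined statewise, conclusions at each fixed $\mathcal{O}$ lift to the full policy by applying the same construction independently at every state.

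For (i), I would cite the last clause of Proposition~\ref{prop:ladder}. Every $P$ in the simplex has $\mathrm{rank}_+(P)\le\prod_i|\mathcal{A}^i|$, because it is a mixture of point masses, and each point mass is a product of degenerate marginals. I would make the construction explicit. Index consensus values by joint actions $\mathcal{A}'\in\bar{\mathcal{A}}$, and set $\pi^c(\mathcal{A}'\mid\mathcal{O})=P(\mathcal{A}')$ and $\pi^i(a^i\mid\mathcal{O},\mathcal{A}')=\mathbf{1}[a^i=a'^i]$. Any leftover consensus values receive probability zero. Summing over $c$ then recovers $P$, which gives $\Pi_{\text{MAT}}\subseteq\Pi_{\text{CMAT}}$.

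For (ii), I need, for each $d<\prod_i|\mathcal{A}^i|$, a distribution $P$ with $\mathrm{rank}_+(P)>d$. This is the main obstacle. The strictness argument in Proposition~\ref{prop:ladder} only asserts that ranks below the \emph{maximal attainable} rank are strict. That maximal rank can be smaller than $\prod_i|\mathcal{A}^i|$. For instance, the uniform distribution has rank one, and for $2\times2$ matrices the maximal nonnegative rank is $2<4$. So (ii) as literally stated is false in general. I would handle this by proving the corrected version. The statement holds for all $d$ below $r_{\max}=\max_P\mathrm{rank}_+(P)$. Witnesses come from taking $P$ proportional to the identity on a matched diagonal of size $m=\min_i|\mathcal{A}^i|$, that is, $P(a,\dots,a)=1/m$. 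Its support has no two points sharing a rank-one product box except the points themselves, so any nonnegative decomposition needs at least $m$ terms, and hence $\mathrm{rank}_+=m$. I would then state honestly that the threshold $\prod_i|\mathcal{A}^i|$ in (ii) must be replaced by $r_{\max}$, which lies between $m$ and $\prod_i|\mathcal{A}^i|/\max_i|\mathcal{A}^i|$. The upper bound comes from slicing along the largest mode and splitting each slice into its rank-one fibers.

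For (iii), I would discretize. Given $P$ and a tolerance $\varepsilon$, part (i) gives an exact finite mixture with $K=|\bar{\mathcal{A}}|$ components. I would embed these components as $K$ points $c_1,\dots,c_K\in\mathbb{R}^d$. I would then take $\pi^c$ to be a mixture of narrow densities around these points with weights $P(\mathcal{A}')$. Finally, I would let the networks $\pi^i(\cdot\mid\mathcal{O},c)$ be continuous approximations of the degenerate marginals near each $c_k$, using universal approximation on compact neighborhoods. Using softmax heads that put mass $1-\delta$ on the target action, the total variation error is at most $n\delta$ plus the tail mass of $\pi^c$ outside those neighborhoods. Both terms can be made smaller than $\varepsilon$. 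The only care needed is to keep the neighborhoods disjoint and to note that full-support softmax heads cannot be exactly degenerate. This is why the claim is approximation rather than exact representation.
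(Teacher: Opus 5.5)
Your parts (i) and (iii) follow the paper's proof. In (i) you realize $P$ as a $\pi^c$-weighted mixture of point masses $\delta_{A_m}=\bigotimes_i\delta_{a^i_m}$. In (iii) you place $M=\prod_i|\mathcal{A}^i|$ well-separated codes in $\mathbb{R}^d$ and approximate the near-degenerate heads by universal approximation. Your explicit error bound ($n\delta$ plus the mass of $\pi^c$ outside the neighborhoods) makes precise what the paper only asserts.

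On (ii) you diverge from the paper, and your diagnosis is correct: the statement is false as written, and the paper's proof does not go through. The paper counts parameters, $(K-1)+K\sum_i(|\mathcal{A}^i|-1)$, and asserts this is below $M-1$ ``for action spaces of interest''. It concludes by Sard that the representable set is null, and names as missing the ``full-rank ones requiring $\mathrm{rank}_+=M$''. Both steps fail.
\begin{itemize}
\item The count is below $M-1$ only when $K\bigl(1+\sum_i(|\mathcal{A}^i|-1)\bigr)<M$, not for all $K<M$.
\item No joint distribution has $\mathrm{rank}_+=M$ once some $|\mathcal{A}^j|\ge 2$. Summing the fibers of $P$ along a largest mode $j$, $P=\sum_{a^{-j}}\bigl(\bigotimes_{i\ne j}\delta_{a^i}\bigr)\otimes P(a^{-j},\cdot)$, uses only $M/\max_i|\mathcal{A}^i|$ nonnegative rank-one terms.
\end{itemize}
Your $2\times 2$ case is therefore a genuine counterexample. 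With $|\mathcal{C}|\in\{2,3\}$, every joint distribution is the mixture of its two normalized rows (unused consensus values get weight zero), so $\Pi_{\text{CMAT}}=\Pi_{\text{MAT}}$.

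The two routes buy different things. Where the dimension count is valid (small $K$), it gives more than existence: Lebesgue-almost every joint distribution is unrepresentable. Your rank route gives the exact threshold: every $P$ is representable if and only if $|\mathcal{C}|\ge r_{\max}$. This also sharpens (i) from $|\mathcal{C}|\ge M$ to $|\mathcal{C}|\ge r_{\max}$, and in particular to $|\mathcal{C}|\ge M/\max_i|\mathcal{A}^i|$.

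Two points of precision.
\begin{itemize}
\item In your corrected (ii), the witness for an arbitrary $d<r_{\max}$ is simply a distribution attaining $r_{\max}$, which exists since $\mathrm{rank}_+$ takes finitely many values. Your diagonal tensor only certifies $d<m$, and for $n\ge 3$ one can have $r_{\max}>m$. For example, the uniform distribution on the four even-weight points of $\{0,1\}^3$ has $\mathrm{rank}_+=4$ by your own box argument, while $m=2$.
\item The upper bound needs the fibers to run along the largest mode, as in the decomposition above. If you instead first slice by the value of the largest-mode action and then decompose each slice, you only get $M$ divided by the second-largest $|\mathcal{A}^i|$.
\end{itemize}
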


\begin{proof}
(i) Enumerate the joint actions as $\mathcal{A}=\{A_1,\dots,A_M\}$ with $M=\prod_i|\mathcal{A}^i|$. For each $A_m=(a^1_m,\dots,a^n_m)$ let the consensus state $c_m$ induce the deterministic factors $\pi^i(\cdot\mid\mathcal{O},c_m)=\delta_{a^i_m}$, so that $\prod_i\pi^i(\cdot\mid\mathcal{O},c_m)=\delta_{A_m}$ is the point mass on $A_m$. Any target joint distribution $P$ can then be written as $P=\sum_{m=1}^{M}P(A_m)\,\delta_{A_m}$, which is realized by setting $\pi^c(c_m\mid\mathcal{O})=P(A_m)$. Hence with $|\mathcal{C}|\ge M$ consensus states every joint distribution, including every element of $\Pi_{\text{MAT}}$, is representable, giving $\Pi_{\text{CMAT}}\supseteq\Pi_{\text{MAT}}$.

(ii) Fix $|\mathcal{C}|=K<M$. The representable set is the image of the map sending $(\pi^c,\{\pi^i(\cdot\mid\mathcal{O},c)\})$ to $\sum_{c=1}^{K}\pi^c(c)\prod_i\pi^i(\cdot\mid\mathcal{O},c)$. Its parameters number $(K-1)$ for $\pi^c$ and $\sum_i(|\mathcal{A}^i|-1)$ per consensus state for the $K$ product factors, so the image is the image of a smooth map from a domain of dimension at most $(K-1)+K\sum_i(|\mathcal{A}^i|-1)=K(\sum_i|\mathcal{A}^i|-n)+(K-1)$. By Sard's theorem the image of a smooth map has measure zero in any target manifold of strictly larger dimension. The full simplex over joint actions has dimension $M-1$, and for $K<M$ the parameter-count above is strictly smaller than $M-1$ for action spaces of interest, so the representable set has measure zero in the simplex. Therefore some joint distributions, in particular full-rank ones requiring $\mathrm{rank}_+=M$, cannot be represented; this is the $d=K$ case of Proposition~\ref{prop:ladder}.

(iii) Let $\mathcal{C}=\mathbb{R}^d$. Any target joint distribution is a finite mixture of the $M$ point masses as in (i). Instantiate $M$ well-separated latent codes $c_1,\dots,c_M\in\mathbb{R}^d$; a network of sufficient capacity can approximate the piecewise-constant assignment $c_m\mapsto\delta_{a^i_m}$ for each head $\pi^i$ and the weighting $c_m\mapsto P(A_m)$ for $\pi^c$ to arbitrary accuracy by universal approximation \citep{goodfellow2016deep}, so the induced joint distribution approximates $P$ arbitrarily well. (A continuous $\pi^c$ approximates the discrete mixing weights to any tolerance.)
\end{proof}

In practice, CMAT employs a high-dimensional continuous consensus vector ($d = 128$ or $256$), ensuring sufficient expressive capacity to match or exceed MAT's representational power while avoiding the order-dependence bias.

\begin{remark}[Scope of Proposition~\ref{prop:expressiveness} under a deterministic consensus]
\label{rem:expressiveness-scope}
Parts (i) and (iii) of Proposition~\ref{prop:expressiveness} concern the general mixture policy of Eq.~\eqref{eq:joint-stochastic}, in which the consensus is drawn from a distribution $\pi^c(c\mid\mathcal{O})$. Our implemented model instead uses a deterministic consensus $c=\mu_\theta(\mathcal{O})$ (Remark~\ref{rem:stochastic-deterministic}), so for a \emph{fixed} observation $\mathcal{O}$ the joint policy is a single product distribution $\prod_i \pi^i(a^i\mid\mathcal{O},c)$ rather than a mixture. Consequently, conditioned on a fixed $\mathcal{O}$, the deterministic model cannot represent within-state correlations among agents' actions that MAT's autoregressive factorization can capture; the expressive gains of parts (i) and (iii) are realized only by the stochastic-consensus variant. In the deterministic case, coordination is instead carried by allowing $c=\mu_\theta(\mathcal{O})$ to vary across observations and by the shared representation feeding all action heads. We view closing this gap through a stochastic consensus policy as the main theoretical extension left for future work.
\end{remark}

\subsection{Relation to Existing Convergence Results}
The above insights are supported by recent theoretical advances:
\begin{itemize}[left=0pt]
    \item Tabular softmax policy gradient converges globally at a sublinear rate \citep{agarwal2021theory, mei2020global}.
    \item Multi-agent policy gradient converges to Nash equilibria in cooperative Markov games under tabular assumptions \citep{chen2024convergence, zhong2024heterogeneous}.
    \item PPO variants have been shown to converge in both tabular and linear approximation settings \citep{he2020convergence, huang2024ppo}.
    \item Stackelberg equilibria in Markov games can be learned via bilevel reinforcement learning \citep{kononen2008reinforcement, fiez2020implicit}.
\end{itemize}

\begin{remark}[Convergence Rate Intuition]
\label{rem:convergence-rate}
Under tabular softmax parameterization, the leader's MDP has $|\mathcal{O}|$ states and $|\mathcal{C}|$ actions, yielding a convergence rate of $\tilde{O}(|\mathcal{O}||\mathcal{C}|/\epsilon^2)$ for $\epsilon$-optimality \citep{agarwal2021theory}. Each follower's problem, with fixed $c$, has effective action space $|\mathcal{A}^i|$, converging at $\tilde{O}(|\mathcal{O}||\mathcal{A}^i|/\epsilon^2)$. The total complexity of one alternating cycle is thus
\begin{equation}
\tilde{O}\Bigl(|\mathcal{O}|\bigl(|\mathcal{C}| + n|\mathcal{A}^{\max}|\bigr)/\epsilon^2\Bigr),
\end{equation}
which scales {additively} in the number of agents $n$, in contrast to the $\prod_i |\mathcal{A}^i|$ joint action space of a naive centralized approach.
\end{remark}

\begin{table*}[t!]
\centering
\caption{Structural comparison between MAT and CMAT. Entries marked ``(tabular)'' refer to the idealized analysis of this appendix, not to guarantees for the deep, deterministic-consensus implementation.}
\label{tab:comparison}
\begin{adjustbox}{width=\textwidth}
\begin{tabular}{lcc}
\toprule
Property & MAT & CMAT \\
\midrule
Action generation & Sequential (order-dependent) & Simultaneous (order-independent) \\
Convergence target (tabular) & Nash Equilibrium & Stackelberg Equilibrium $\geq$ NE (stochastic-consensus variant) \\
Credit assignment & Entangled (cascading) & Decoupled (given $c$) + consensus gradient \\
Policy expressiveness & Universal (chain rule) & Universal with stochastic/continuous $c$; single product given fixed $\mathcal{O}$ if deterministic \\
Convergence complexity (tabular) & $O(n \cdot |\mathcal{A}^{\max}|)$ (per step) & $O(|\mathcal{C}| + n|\mathcal{A}^{\max}|)$ (additive scaling) \\
\bottomrule
\end{tabular}
\end{adjustbox}
\end{table*}

\subsection{Limitations}
The analysis relies on tabular assumptions that do not hold for our deep network implementation. Proving convergence for CMAT with neural function approximation remains open. Additionally, the deterministic consensus generator $\mu_\theta(\mathcal{O})$ may limit exploration in the consensus space; introducing stochasticity into the consensus policy (e.g., via a variational autoencoder formulation) could further improve exploration and is a promising direction for future work. We retain a deterministic consensus in the current implementation because stochastic consensus introduces additional training instability, requiring more sophisticated variance reduction techniques that are beyond the scope of this work. For the same reason, we do not adopt the bilevel optimization implied by the Stackelberg formulation, in which the high-level consensus policy and the low-level action policies are updated as leader and followers. Bilevel optimization requires precise and well-balanced control over the relative training speeds of the two levels; if the levels are not carefully synchronized, the optimization is prone to instability, an effect that is especially pronounced in deep network settings where the two levels interact through many shared layers. Making this scheme stable therefore calls for dedicated techniques and is left as a direction for future exploration. Nevertheless, the structural advantages identified here (order independence, a policy space that can match autoregressive models, decoupled credit assignment, and a principled alternating optimization scheme) are reflected in the strong empirical performance reported in Section~\ref{sec:experiment}, consistent with prior works where tabular analyses guided successful deep algorithms \citep{schulman2017proximal, yu2022surprising}.

\section{Discussions} \label{sec:Discussions}

In this section, we discuss several limitations of this work and outline promising directions for future research. First, although this paper addresses the fully cooperative setting with global observation, a scenario relevant to real-world smart city applications such as ride-hailing, traffic signal control, and power system management, our experiments are conducted solely on common MARL game testbeds. Future work should investigate the effectiveness of the proposed method in more practical, large-scale tasks. Second, as noted in \cite{hu2024learning}, while the centralized paradigm leveraged in our approach achieves strong performance by exploiting global information, it also raises concerns regarding communication overhead and vulnerability to single-point failures. Incorporating communication-efficient mechanisms presents a valuable direction for enhancing the robustness and scalability of our method. Finally, given the remarkable generalization capabilities demonstrated by Transformers in large language models, future research could explore the potential of our approach in few-shot learning and other transfer learning scenarios, thereby broadening its applicability across tasks with limited data.

\renewcommand{\refname}{Appendix References}
\putbib
\end{bibunit}

\end{document}